\documentclass[twoside,11pt]{article}

\usepackage[T1]{fontenc}
\usepackage[utf8]{inputenc}
\usepackage{blindtext}
\usepackage{ifthen}

\IfFileExists{jmlr2e.sty}{%
  \usepackage[abbrvbib,preprint]{jmlr2e}%
}{%
  \usepackage[numbers]{natbib}%
  \usepackage{amssymb,graphicx}%
  \newcommand{\jmlrheading}[7]{}%
  \newcommand{\ShortHeadings}[2]{}%
  \newcommand{\firstpageno}[1]{\setcounter{page}{##1}}%
  \newcommand{\email}[1]{\texttt{##1}}%
  \newcommand{\addr}[1]{##1}%
  \newcommand{\name}[1]{##1}%
  \newcommand{\AND}{\and}%
  \newcommand{\editor}[1]{}%
  \newtheorem{theorem}{Theorem}%
  \newenvironment{proof}{\par\noindent\textbf{Proof.}}{\hfill$\square$\par}%
  \newenvironment{keywords}{\par\noindent\textbf{Keywords: }}{\par}%
}

\usepackage{amsmath,amsfonts}
\usepackage{algorithm}
\usepackage{algorithmic}
\usepackage{array}
\usepackage{textcomp}
\usepackage{stfloats}
\usepackage{url}
\usepackage{verbatim}
\usepackage[caption=false,font=footnotesize]{subfig}
\usepackage{tabularx}
\usepackage{booktabs}
\usepackage{diagbox}
\usepackage{multirow}
\usepackage{lastpage}

\makeatletter
\@ifundefined{assumption}{}{}
\makeatother

\newcommand{\missingfigure}[1]{%
  \fbox{\parbox[c][0.20\textheight][c]{0.85\linewidth}{\centering Missing figure file:\\ \texttt{#1}}}%
}
\newcommand{\safeincludegraphics}[2][]{%
  \IfFileExists{#2}{\includegraphics[#1]{#2}}{\missingfigure{#2}}%
}
\newcommand{\safeinput}[1]{%
  \IfFileExists{#1}{\input{#1}}{\typeout{Warning: missing input file #1, skipped.}}%
}

\jmlrheading{XX}{2026}{1-\pageref{LastPage}}{Submitted 3/26}{Under review}{XX-0000}{Xuan Qi, Yi Wei, Fanqi Yu, Furao shen, Vittorio Murino, and Cigdem Beyan}

\ShortHeadings{AffineLens}{Wei, Qi, Shen, Zhao, Murino, and Beyan}
\firstpageno{1}

\begin{document}

\title{AffineLens: Capturing the Continuous Piecewise Affine Functions of Neural Networks}

\author{\name Yi Wei$^{*}$ \email ywei@smail.nju.edu.cn \\
       \addr State Key Laboratory of Novel Software Technology\\
       School of Intelligence Science and Technology\\
       Nanjing University, Jiangsu, China\\
       \AND
       \name Xuan Qi$^{*}$ \email xuan.qi@iit.it \\
       \addr AI for Good\\
       Istituto Italiano di Tecnologia, Genoa, Italy\\
       DITEN\\
       University of Genoa, Genoa, Italy\\
       \AND
       \name Furao Shen$^{\dagger}$ \email frshen@nju.edu.cn \\
       \name Jian Zhao \email jianzhao@nju.edu.cn \\
       \addr State Key Laboratory of Novel Software Technology\\
       School of Artificial Intelligence\\
       Nanjing University, Jiangsu, China\\
       \AND
       \name Vittorio Murino \email vittorio.murino@iit.it \\
       \name Cigdem Beyan \email cigdem.beyan@univr.it \\
       \addr AI for Good\\
       Istituto Italiano di Tecnologia, Genoa, Italy\\
       Department of Computer Science\\
       University of Verona, Verona, Italy}
\editor{My editor}

\maketitle
\begingroup
\renewcommand{\thefootnote}{\fnsymbol{footnote}}
\footnotetext[1]{Equal contribution. \quad $^{\dagger}$Corresponding author.}
\endgroup

\begin{abstract}
Piecewise affine neural networks (PANNs) provide a principled geometric perspective on neural network expressivity by characterizing the input--output map as a continuous piecewise affine (CPA) function whose complexity is governed by the number, arrangement, and shapes of its affine regions. However, existing interpretability and expressivity analyses often rely on indirect proxies (e.g., activation statistics or theoretical upper bounds) and rarely offer practical, accurate tools for enumerating and visualizing the induced region partition under realistic architectures and bounded input domains. In this work, we present AffineLens, a unified framework for computing the hyperplane arrangements and polyhedral structures underlying PANNs. Given a calibrated (bounded) input polytope, AffineLens identifies the subset of neuron-induced hyperplanes that intersect the domain, enumerates the resulting affine sub-regions in a layer-wise manner, and returns provably non-empty maximal CPA regions together with interior representatives. The framework further provides visualizations of region partitioning and decision boundaries, enabling qualitative inspection alongside quantitative region counts. By exploiting the affine restriction property of CPA networks under fixed activation patterns, AffineLens supports a broad class of modern components, including batch normalization, pooling, residual connections, multilayer perceptrons, and convolutional layers. Finally, we use AffineLens to perform a systematic empirical study of architectural expressivity, comparing networks through region complexity metrics and revealing how design choices influence the geometry of learned functions.
\end{abstract}

\begin{keywords}
Neural networks, interpretability, affine regions, expressivity
\end{keywords}

\section{Introduction}
Neural networks are a family of parametric function approximators that map inputs to outputs by composing simple transformations across multiple layers. By stacking linear (affine) maps with nonlinear activation functions, deep networks can represent highly complex input--output relationships and have achieved strong empirical success across vision, language, and control \cite{book36}. Despite this success, a central challenge remains: understanding why a trained network behaves the way it does and how its architecture and parameters shape the resulting decision geometry. Piecewise affine neural networks (PANNs), i.e., deep networks composed of affine maps and continuous piecewise affine (CPA) nonlinearities, form an especially important and widely used subclass. This includes many practical architectures built from ReLU-like activations (and their variants), for which the network function is continuous and piecewise affine over the input domain \cite{book48}. PANNs are valuable to study because they admit a concrete geometric interpretation: the input space is partitioned into a finite collection of affine convex regions, within each of which the network reduces to a single affine function, while global nonlinearity arises only through the combinatorial arrangement of these regions \cite{book1}. This region-based view offers an appealing route to understanding expressivity, robustness, and decision geometry beyond purely statistical summaries (e.g., accuracy). In particular, the affine-region decomposition provides an explicit mechanistic object that links architecture and parameters to the shape, complexity, and locality of decision boundaries. As a result, analyzing and enumerating affine regions can yield principled insights into how depth and width control function complexity, how sensitive predictions are to perturbations, and why certain decision surfaces emerge during training.

However, despite the central role of affine regions in PANN theory, precise and general tools for quantifying and inspecting these regions remain limited. Existing studies often rely on (i) theoretical bounds that can be loose and hard to connect to the behavior of a trained model \cite{book11,book12}, or (ii) sampling-based approximations that are intrinsically incomplete and may systematically miss small yet decision-critical regions, especially those concentrated near decision boundaries \cite{book9}. These limitations become more pronounced when moving from toy settings to practical architectures: batch normalization, pooling, residual connections, and convolutional components substantially complicate the induced hyperplane arrangement, and many prior methods are either restricted to specific network families or limited to enumeration in 2D settings \cite{book10}. Consequently, it is difficult to perform reproducible and fair comparisons of expressivity across architectures or to diagnose how and where a trained model forms intricate decision geometry.

Motivated by this gap, we seek a unified framework that can exactly enumerate the affine regions expressed by a given trained PANN on a task-relevant bounded domain. 
Concretely, we focus on a calibrated input polytope $A_0$, which captures the operational/data-supported region of interest and enables meaningful, comparable measurements across models. On such a domain, the network induces a well-defined hyperplane arrangement whose cells correspond to affine regions; therefore, exhaustive enumeration becomes a form of geometric auditing: it makes region complexity and decision-boundary structure observable quantities rather than indirect proxies. This capability is essential for answering questions such as how depth/width trade off in realized expressivity, how residual and convolutional inductive biases reshape partitions, and how region geometry evolves during training.

Building on the above geometric view and our goal of exact region auditing on a calibrated domain, we consider a practically relevant class of CPA activations that includes ReLU and its variants such as LeakyReLU \cite{book3}. Specifically, throughout this paper we use the two-slope element-wise activation
\begin{equation}
\label{eq:4}
\sigma(x)=
\begin{cases}
ax, & x>0,\\
bx, & x\le 0,
\end{cases}
\qquad a,b\in\mathbb{R}.
\end{equation}
This parameterization preserves the PANN structure while allowing a nonzero (and potentially learnable) negative slope, thereby enlarging the set of region-wise affine maps realized by the network \cite{book1,book4}. When batch normalization is present, we treat it in inference mode (with frozen statistics), in which case it reduces to an affine map and thus preserves the CPA structure \cite{book5}. Consequently, given a fixed input, the entire network can be mapped into a concrete geometric structure, a process fundamentally representing the hyperplane arrangement \cite{book6,book7} induced by the CPA components. We introduce a method for accurately calculating and enumerating the number of affine regions represented by PANNs. Our approach accommodates residual connections \cite{book19} and fully connected layers, extends to networks of arbitrary depth and width, and enables the enumeration of affine regions generated in high-dimensional spaces. Additionally, this method facilitates the localized representation of any calibrated region within the input space, offering insights into the current decision characteristics of the model. Thus, our method offers flexible and comprehensive analysis of the expressive capacity of various PANN structures, distinguishing itself from existing methods \cite{book9,book10}. Our contributions are summarized as follows:
\begin{itemize}
\item We propose AffineLens, a unified and general framework for analyzing PANNs by exactly computing and visualizing their affine region decomposition. AffineLens applies to high-dimensional input spaces and a wide variety of architectures.
\item We formulate the search for sub-affine regions as a linear programming (LP) problem and propose a region exploration algorithm grounded in breadth-first traversal over hyperplane arrangements. This method ensures exhaustive coverage of all affine subregions within a bounded input space, without relying on heuristic sampling strategies.
\item We conduct a quantitative empirical analysis of how neural networks, under varying architectures and training strategies, influence the representation of affine regions, thereby showcasing the flexibility of AffineLens.
\end{itemize}

\section{Related Work}

\subsection{Counting Affine Regions}
Understanding the geometric complexity of PANNs has attracted significant attention. Early efforts include \cite{book8}, which proposed an algorithm to compute the number of affine regions in maxout networks. \cite{book9} introduced a polyhedral extraction method that operates on subdivided edges rather than explicitly enumerating full regions. \cite{book10} proposed SplineCam, a technique grounded in PANN theory for high-fidelity computation of network geometry. In the context of graph neural networks, \cite{book11} derived tight upper and lower bounds on the number of affine regions. For convolutional ReLU networks, \cite{book12} analyzed both the maximal and average number of regions. Leveraging tropical geometry, \cite{book13} explicitly calculated the number of boundary and affine segments. To mitigate the exponential growth of affine regions with depth, \cite{book14} proposed an accuracy-based approach that reduces this growth to a polynomial function of width.

\subsection{Evolution of Affine Regions}
Beyond counting, the dynamics of affine-region formation have also been studied. \cite{book15,book55} employed a geometric framework to analyze how PANNs hierarchically organize input signals. \cite{book16} conducted empirical analyses of how the count and density of affine regions evolve in reinforcement learning tasks with continuous control. \cite{book17} examined the relationship between ReLU network architecture and the configuration of decision regions. In addition, \cite{book18} showed that uniform sampling becomes exponentially inefficient for recovering small-volume regions as input dimensionality grows.

\subsection{Expressive Capacity of Neural Networks}
The expressive capacity of PANNs has been investigated from multiple perspectives. The roles of network depth and width have been characterized in \cite{book20,book21,book22,book23,book43,book52}, while the influence of training parameters on expressiveness has been examined in \cite{book24,book25,book26,book27,book49,book53,book54}. Furthermore, extensions of the universal approximation theorem \cite{book35} have been developed in \cite{book30,book31,book32,book33,book51}, providing theoretical foundations for the representational power of PANNs.
Complementary to these global expressivity results, recent work on local complexity in ReLU networks characterizes expressive capacity through the density of linear regions over the input distribution and links it to representation learning and optimization dynamics \cite{book50}.

\subsection{Positioning of Our Method}
Compared with existing methods, our work differs in both methodology and scope. While prior studies often rely on sampling-based approximations or theoretical estimates \cite{book9,book11,book12}, we formulate affine-region enumeration as an LP-based procedure, enabling exact and exhaustive computation over a task-calibrated bounded polytope and returning region-wise geometric objects for downstream inspection. Moreover, unlike SplineCam \cite{book10}, which targets 2D slices for visualization, AffineLens supports enumeration in the original high-dimensional input space. Finally, in contrast to the arrangement-driven approach \cite{book18} that primarily enumerate feasible activation patterns, our framework is explicitly designed for region-level geometric auditing on a calibrated domain, facilitating localized analysis of decision geometry.

\section{Methodology}
This section introduces our proposed method, AffineLens, for computing, identifying, and visualizing affine regions in PANNs. We describe how AffineLens computes CPA function representations, followed by a formal description of our region enumeration algorithm.

\subsection{Notation}
\label{sec:notation}
We summarize the common notation used in the remainder of this section.

\paragraph{Network and dimensions}
The input dimension is $d_0$, the network depth is $L$, and the width of layer $l$ is $d_l$ for $l=1,\dots,L$.

\paragraph{Regions and counting}
$A_0\subset\mathbb{R}^{d_0}$ is the calibrated bounded input polytope (Eq.~\eqref{(4)}).
For a reference point $\hat{x}\in A_0$, $A_l^{\hat{x}}\subseteq\mathbb{R}^{d_0}$ denotes the layer-$l$ affine region induced by the sign pattern up to layer $l$ (Eq.~\eqref{(10)}).
We maintain a representative set $\widehat{X}_l$ (Eq.~\eqref{(11)}), and denote the number of distinct layer-$l$ regions by
$R_l \triangleq |\widehat{X}_l|$ (with $R_0=1$).

\paragraph{Hyperplanes}
Given a parent region $A_{l-1}^{\hat{x}}$, layer $l$ induces neuron hyperplanes
$\mathcal{H}_l^{\hat{x}}=\{h_{l,i}^{\hat{x}}\}_{i=1}^{d_l}$ (Eq.~\eqref{(12)}).
We write
\[
\widehat{\mathcal{H}}_l^{\hat{x}}
\triangleq \{h\in\mathcal{H}_l^{\hat{x}} \mid h\cap A_{l-1}^{\hat{x}}\neq \emptyset\},
\qquad
m_l \triangleq \big|\widehat{\mathcal{H}}_l^{\hat{x}}\big| \le d_l,
\]
and use $m_l=d_l$ for worst-case bounds.

\paragraph{Polytope constraints}
We represent a region/polytope as an intersection of $K$ linear inequalities, and denote by $K_0$ the number of inequalities defining $A_0$.
Let $K_l$ be the number of inequalities used to describe a typical layer-$l$ region in input space.

\paragraph{LP complexity model}
A linear program with $n$ decision variables and $K$ constraints costs time $T_{\mathrm{LP}}(n,K)$.
In particular, feasibility and facet-crossing checks typically use $n=d_0$, while a Chebyshev-center LP uses $n=d_0+1$ (radius variable included).

\begin{figure*}[t]
  \centering
  \subfloat[]{\safeincludegraphics[width=0.22\textwidth,height=0.20\textheight,keepaspectratio]{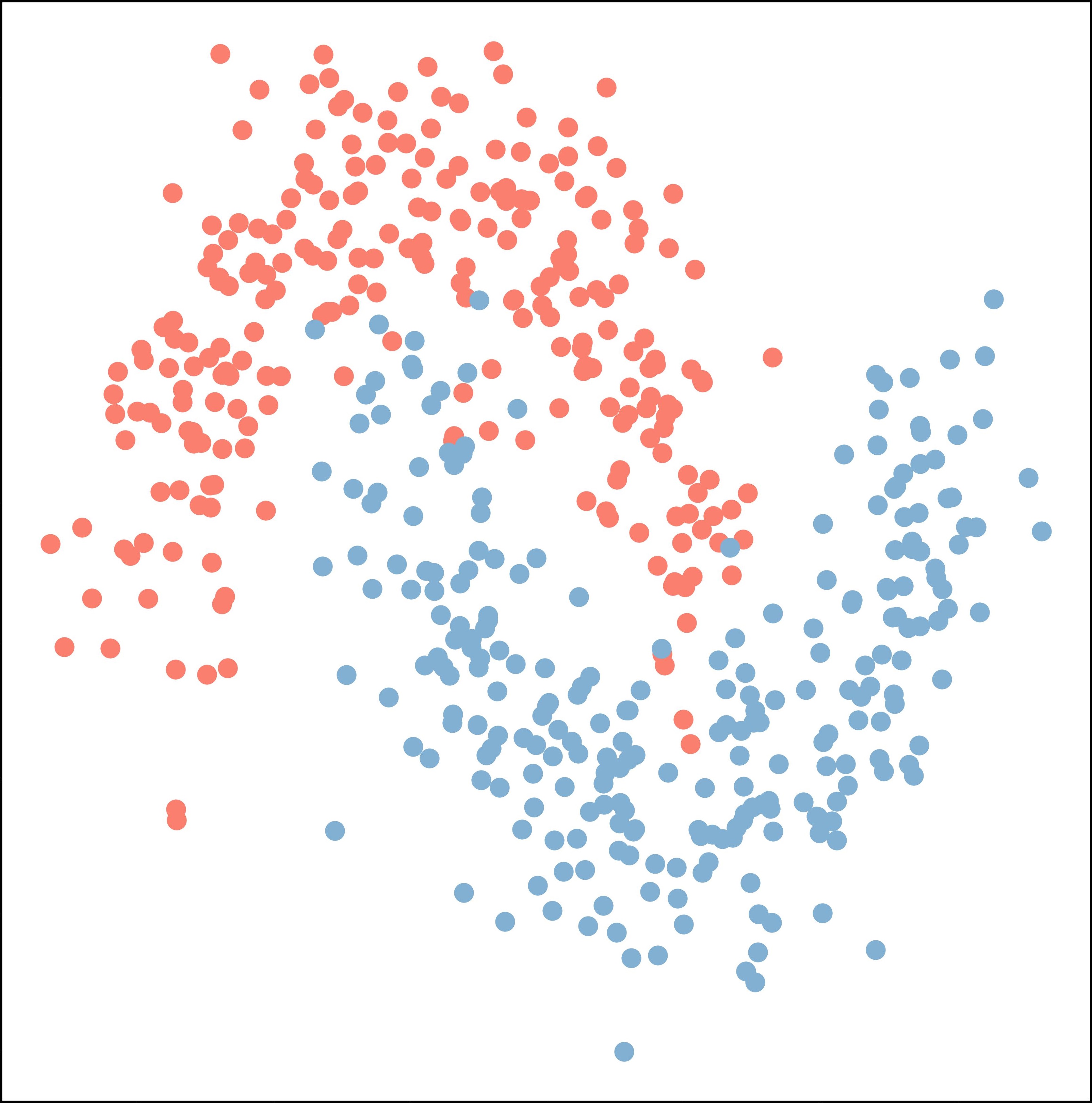}\label{fig:f1a}}\hfill
  \subfloat[]{\safeincludegraphics[width=0.22\textwidth,height=0.20\textheight,keepaspectratio]{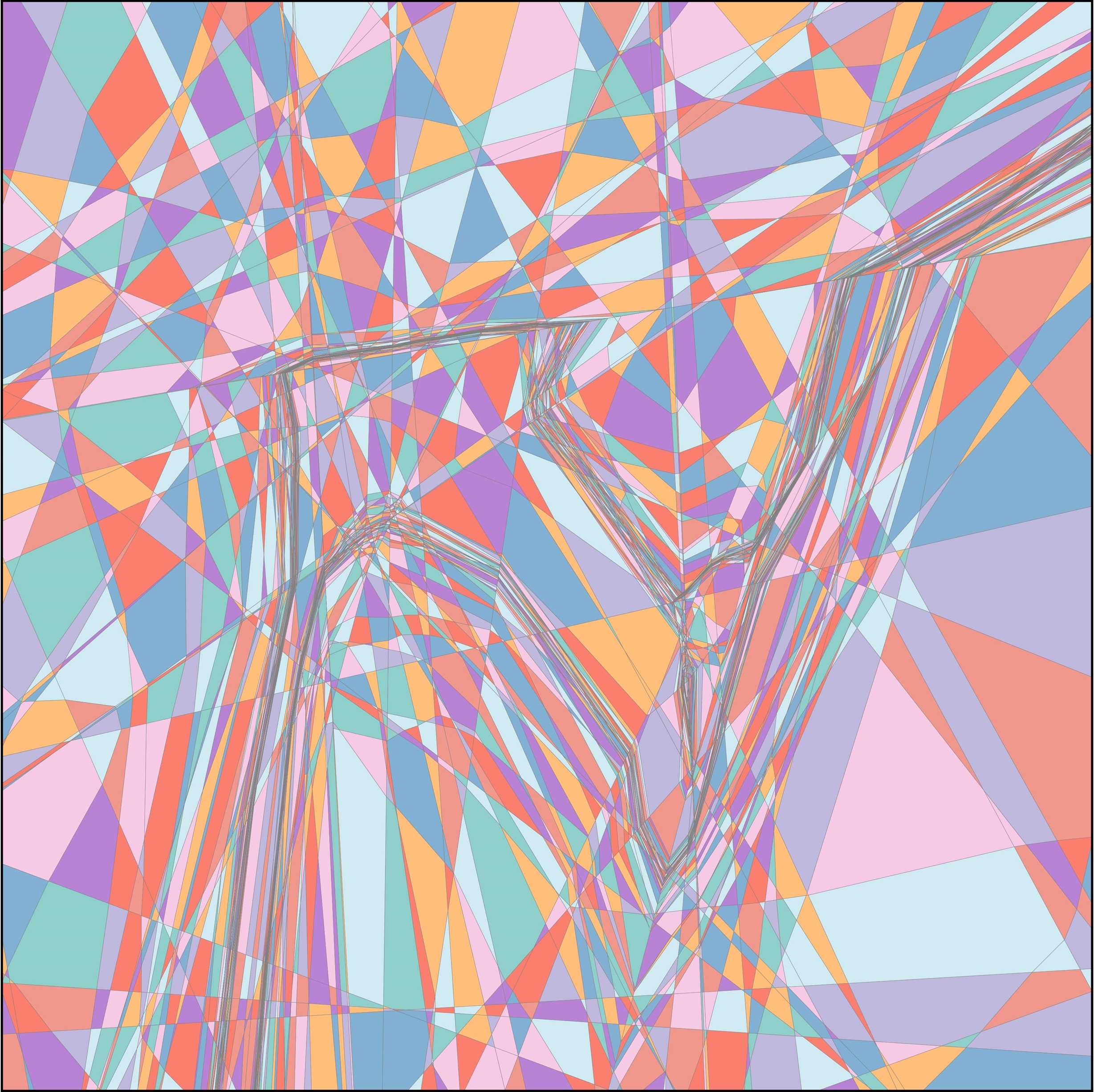}\label{fig:f1b}}\hfill
  \subfloat[]{\safeincludegraphics[width=0.22\textwidth,height=0.20\textheight,keepaspectratio]{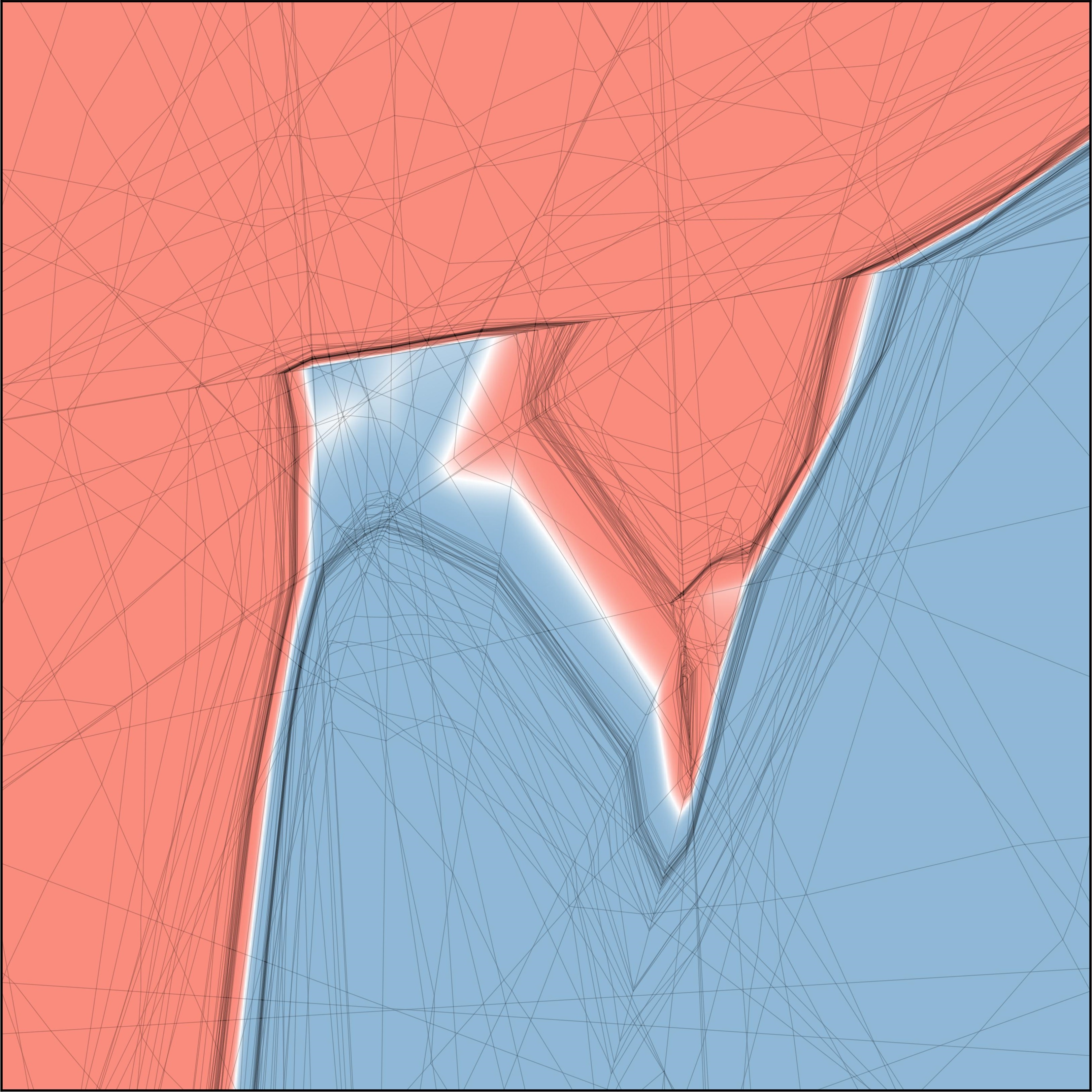}\label{fig:f1c}}\hfill
  \subfloat[]{\safeincludegraphics[width=0.22\textwidth,height=0.20\textheight,keepaspectratio]{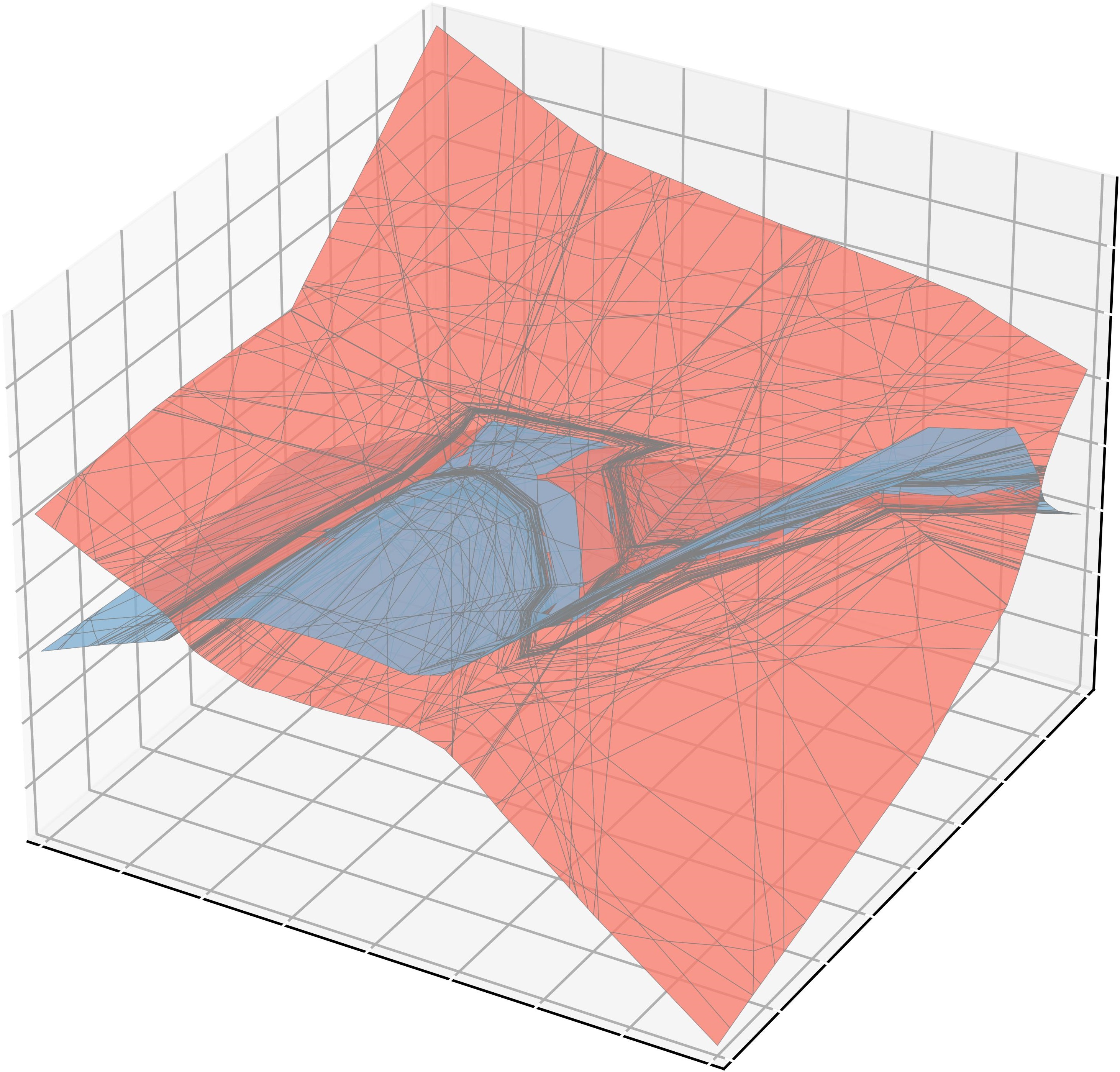}\label{fig:f1d}}

  \caption{Precise visualization of input distribution, affine regions, and decision boundaries in a PANN, shown in correspondence. (a) Input distribution of the network. (b) Geometric structure of the network in a 2-dimensional space, where each neuron induces a hyperplane and each distinct color represents an affine region. (c) The white band indicates the vicinity of the decision boundary, which is formed by intersections of multiple hyperplanes. The remaining 2 colors represent 2 categories of inputs. (d) To aid understanding of the decision boundary, a 3-dimensional visualization of affine regions and the decision boundary is presented.}
  \label{fig:f1}
\end{figure*}

\begin{figure}[t]
  \centering
  \subfloat[]{%
    \safeincludegraphics[width=0.235\linewidth,keepaspectratio]{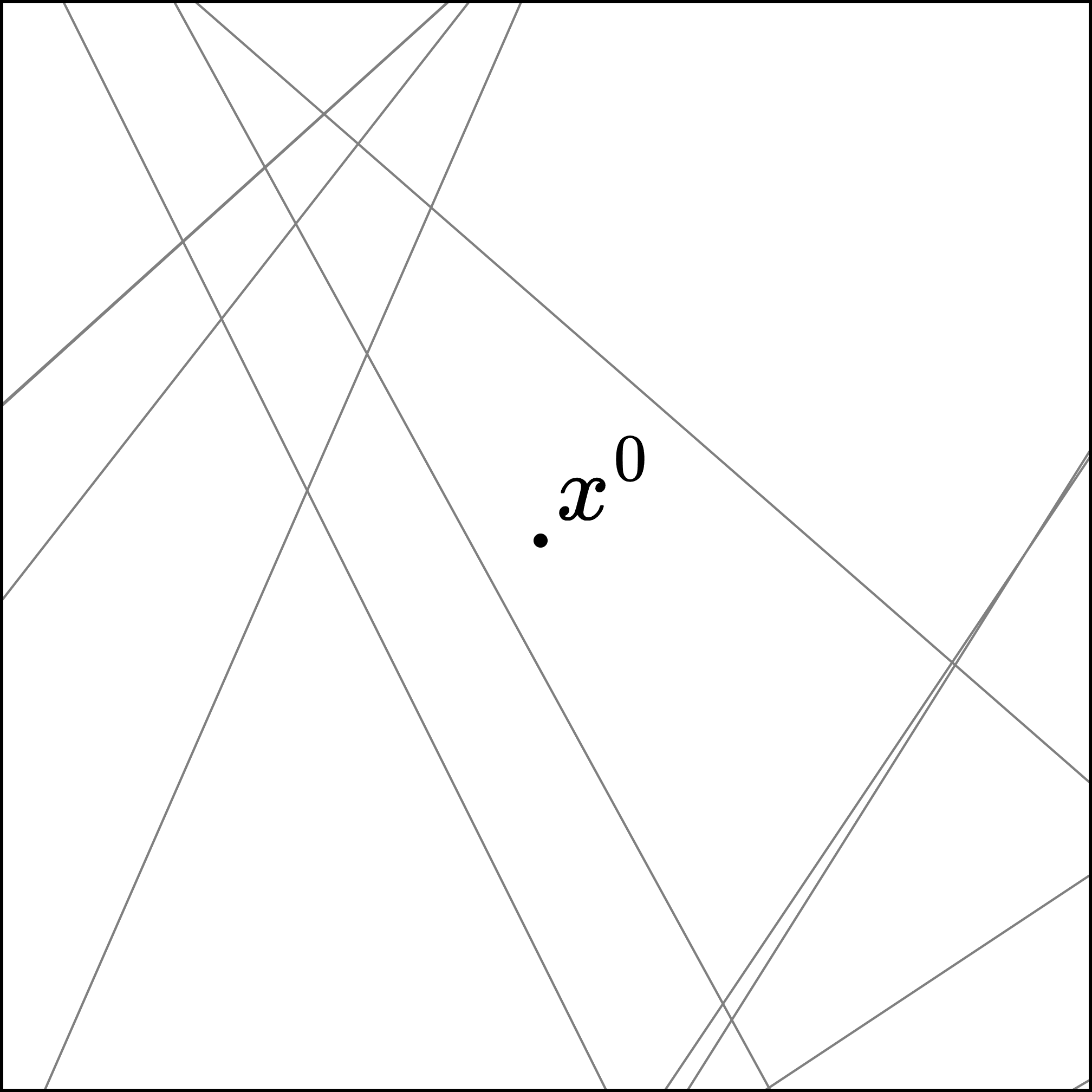}%
    \label{fig:f2a}%
  }\hfill
  \subfloat[]{%
    \safeincludegraphics[width=0.235\linewidth,keepaspectratio]{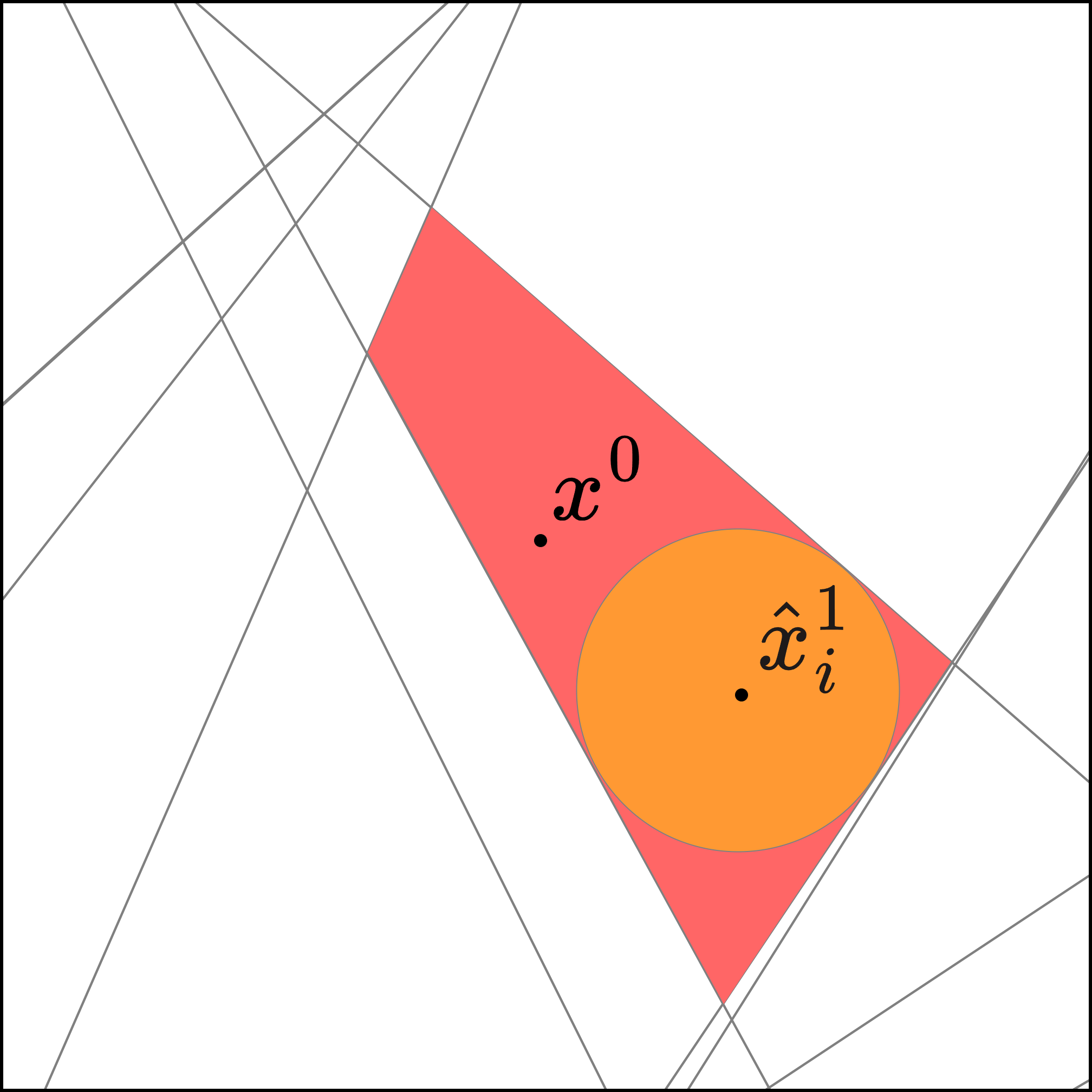}%
    \label{fig:f2b}%
  }\hfill
  \subfloat[]{%
    \safeincludegraphics[width=0.235\linewidth,keepaspectratio]{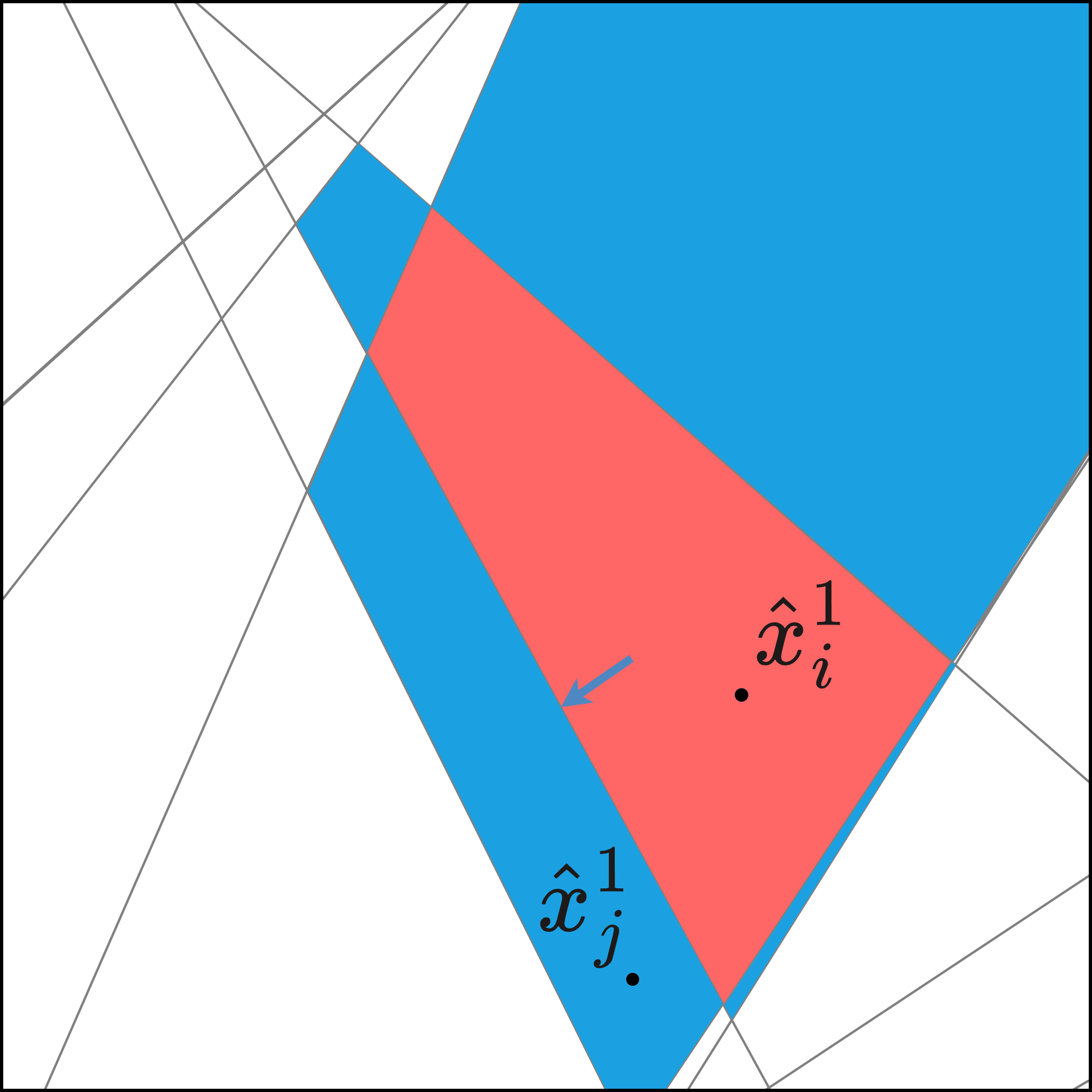}%
    \label{fig:f2c}%
  }\hfill
  \subfloat[]{%
    \safeincludegraphics[width=0.235\linewidth,keepaspectratio]{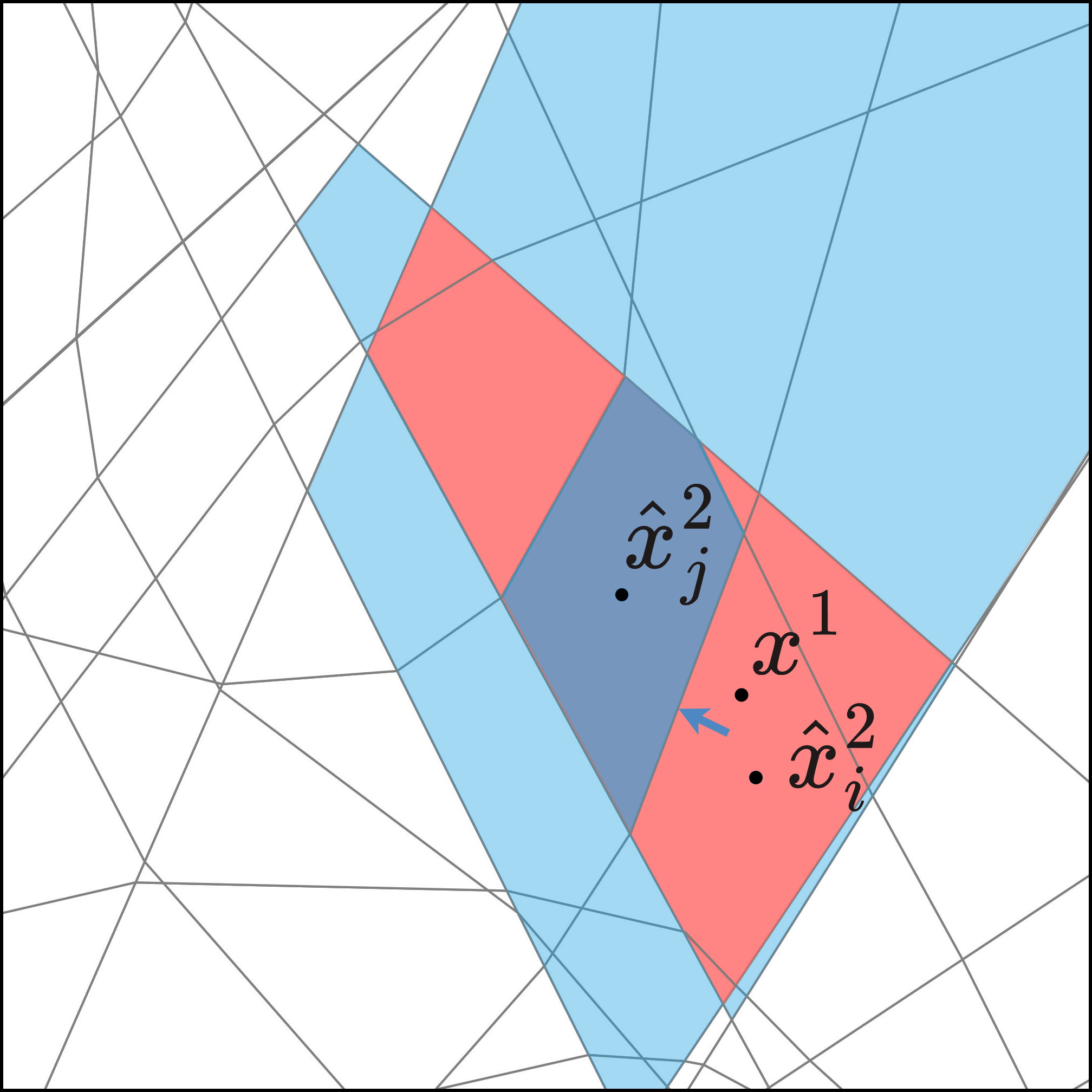}%
    \label{fig:f2d}%
  }

  \caption{Visualization of the affine region search. (a) Define an input space $A_{0}$ and a point $x^0$ within it; calculate the set of hyperplanes intersecting $A_{0}$ through a PANN. (b) Derive the sub-region $A^{\hat{x}_i}_{1}$ containing $x^0$ (highlighted in red) where ${\hat{x}_i}^{1}$ is the center of the largest inscribed ball (orange circle) of this region. Then, label it. (c) Explore the edges of the region $A^{\hat{x}_i}_{1}$ (blue arrow) and identify an unmarked adjacent region $A^{\hat{x}_j}_{1}$ (one of the blue regions). Repeat this process to identify all sub-regions of $A_{0}$. (d) Select a point $x^1$ within region $A^{\hat{x}_i}_{1}$ to explore, identifying its sub-region $A^{\hat{x}_i}_{2}$. Following the blue arrow, we then explore the neighboring region $A^{\hat{x}_j}_{2}$. In this way, we can delve into deeper layers and repeat the search.}
  \label{fig:f2}
\end{figure}

\subsection{Piecewise Affine Neural Networks}
A typical feedforward PANN can be viewed as a composition of affine maps and element-wise CPA activations.
Let $L\in\mathbb{N}$ denote the number of layers, with widths $\{d_l\}_{l=0}^{L}$, where $d_0$ and $d_L$ are the input and output dimensions, respectively.
For each layer $l\in\{1,\dots,L\}$, define the affine map $T_l:\mathbb{R}^{d_{l-1}}\rightarrow\mathbb{R}^{d_l}$ by
\begin{equation}
\label{(eq:1)}
T_l(x)=w_l x + b_l,
\end{equation}
where $w_l\in\mathbb{R}^{d_l\times d_{l-1}}$ and $b_l\in\mathbb{R}^{d_l}$.
Let $\sigma:\mathbb{R}^{d_l}\rightarrow\mathbb{R}^{d_l}$ denote an element-wise activation.
Then the network $NN:\mathbb{R}^{d_0}\rightarrow\mathbb{R}^{d_L}$ is
\begin{equation}
\label{eq:2}
NN(x)=T_L\circ \sigma \circ T_{L-1}\circ \sigma \circ \cdots \circ \sigma \circ T_1(x).
\end{equation}
Equivalently, letting $x_0=x$, we have
\begin{equation}
\label{eq:3}
\begin{aligned}
x_l &= \sigma\!\left(T_l(x_{l-1})\right), \quad l=1,\dots,L-1,\\
NN(x) &= T_L(x_{L-1}).
\end{aligned}
\end{equation}

The affine regions of $NN:\mathbb{R}^{d_0}\rightarrow\mathbb{R}^{d_L}$ are defined as the maximal connected subsets of the input space on which $NN$ is affine.
Generically, affine regions are full-dimensional $d_0$; throughout we restrict attention to full-dimensional regions and ignore measure-zero boundaries and degeneracies \cite{book1}.
Fig.~\ref{fig:f1} illustrates the affine-region partition and decision geometry of a ReLU \cite{book37} network in a 2-dimensional input space for binary classification.
Each colored block represents a distinct affine region, and the decision boundary is formed by intersections of multiple hyperplanes induced across layers.
As depth increases, neurons in each successive layer further subdivide regions formed by previous layers, creating a hierarchical refinement of the input partition (Fig.~\ref{fig:f2}).
These densely distributed affine regions provide a convenient and geometry-faithful lens for analyzing network expressivity, decision complexity, and their dependence on architecture and training.

\subsection{The AffineLens Framework}
AffineLens begins from a calibrated input space, detects active hyperplanes layer-by-layer, and enumerates affine
regions via breadth-first search. To facilitate the introduction of AffineLens, we present the relevant definitions
and notation for CPA functions and PANNs.

\paragraph{Calibrated input space}
We define the calibrated bounded input space as a closed polytope
\begin{equation}
\label{(4)}
A_0=\left\{x\in\mathbb{R}^{d_0}\ \middle|\ {\bf W}_0x+{\bf b}_0\ge 0\right\},
\end{equation}
where ${\bf W}_0 \in \mathbb{R}^{K_0 \times d_0}$, ${\bf b}_0 \in \mathbb{R}^{K_0}$, and $K_0 \in \mathbb{N}_{+}$.
The inequality ${\bf W}_0x+{\bf b}_0\ge 0$ is interpreted element-wise.
We assume $A_0$ is bounded; in practice this can be ensured by including explicit calibration constraints such as box bounds (e.g., $x_{\min}\le x\le x_{\max}$) in the H-representation.
Using non-strict inequalities makes $A_0$ a standard H-representation polytope, which is directly compatible with LP-based feasibility tests and facet-crossing checks used throughout our enumeration algorithm.
When discussing region partitions, all statements are understood up to measure-zero boundaries, and our representative points are chosen from the relative interior of each region.

\paragraph{Layer-wise affine representation}
Fix a reference input $\hat{x}\in A_0$. On the region induced by the sign pattern up to layer $l\!-\!1$, the
pre-activation vector of the $l$-th layer is an affine function of the input:
\begin{equation}
\label{(5)}
f^{\hat{x}}_l(x)= {\bf W}^{\hat{x}}_l x + {\bf b}^{\hat{x}}_l,\quad x\in A_{l-1}^{\hat{x}},
\end{equation}
where $A_{l-1}^{\hat{x}}\subseteq \mathbb{R}^{d_0}$ denotes the affine region determined by the
activation/sign pattern up to layer $l\!-\!1$ (and we set $A_0^{\hat{x}}:=A_0$).

\paragraph{CPA activation and slope matrices}
Let $\operatorname{sgn}(\cdot)$ be the sign function
\begin{equation}
\operatorname{sgn}(t)=
\begin{cases}
1, & t>0,\\
-1, & t\le 0,
\end{cases}
\end{equation}
and define the (element-wise) two-slope selector
\begin{equation}
\label{(8)}
\alpha(t)=\frac{a(\operatorname{sgn}(t)+1)-b(\operatorname{sgn}(t)-1)}{2},
\end{equation}
so that $\alpha(t)=a$ for $t>0$ and $\alpha(t)=b$ for $t\le 0$. Accordingly, the CPA activation can be written
as $\sigma(t)=\alpha(t)\,t$.

For a fixed reference input $\hat{x}$, the diagonal slope matrix of the activation following layer $l$ is defined by
\begin{equation}
\label{(9)}
\boldsymbol{\Gamma}_l^{\hat{x}}
=\operatorname{diag}\!\big(\alpha(f_l^{\hat{x}}(\hat{x}))\big),\qquad l=1,\dots,L-1,
\end{equation}
where $\alpha(\cdot)$ is applied element-wise and $\operatorname{diag}(\cdot)$ maps a vector to a diagonal matrix.

\paragraph{Closed-form effective parameters}
Let $\{w_l,b_l\}_{l=1}^{L}$ denote the network weights and biases. Fix a reference input $\hat{x}\in A_0$ and
consider the region where the activation pattern up to layer $l\!-\!1$ is fixed. On this region, the
pre-activation at layer $l$ admits the affine form in Eq.~\eqref{(5)},
$f_l^{\hat{x}}(x)={\bf W}_l^{\hat{x}}x+{\bf b}_l^{\hat{x}}$.

\begin{itemize}
  \item \emph{(Slope matrices)} For each $t=1,\dots,L-1$, the diagonal slope matrix is
  \begin{equation}
  \label{eq:D-def}
  \boldsymbol{\Gamma}_t^{\hat{x}}
  \;\triangleq\;
  \operatorname{diag}\!\big(\alpha(f_t^{\hat{x}}(\hat{x}))\big).
  \end{equation}

\item \emph{(Effective weights)} For $l=1$, ${\bf W}_1^{\hat{x}}=w_1$. For $l\ge 2$,
\begin{equation}
\label{eq:W-eff-chain}
\begin{aligned}
{\bf W}_l^{\hat{x}}
\;=\;
w_l\,\boldsymbol{\Gamma}_{l-1}^{\hat{x}}\,w_{l-1}\,\boldsymbol{\Gamma}_{l-2}^{\hat{x}}\cdots w_2\,\boldsymbol{\Gamma}_1^{\hat{x}}\,w_1
\\
\;=\;
\Big(\prod_{i=2}^{l} w_i\,\boldsymbol{\Gamma}_{i-1}^{\hat{x}}\Big)\,w_1.
\end{aligned}
\end{equation}

  \item \emph{(Effective biases)} For $l=1$, ${\bf b}_1^{\hat{x}}=b_1$. For $l\ge 2$, define the partial effective
  map
  \begin{equation}
  \label{eq:W-partial}
  {\bf W}_k^{(l,\hat{x})}
  \;\triangleq\;
  w_l\,\boldsymbol{\Gamma}_{l-1}^{\hat{x}}\,w_{l-1}\,\boldsymbol{\Gamma}_{l-2}^{\hat{x}}\cdots w_k,
  \qquad k=2,\dots,l,
  \end{equation}
  so that ${\bf W}_l^{(l,\hat{x})}=w_l$ and ${\bf W}_{l-1}^{(l,\hat{x})}=w_l\boldsymbol{\Gamma}_{l-1}^{\hat{x}}w_{l-1}$.
  Then the effective bias admits the equivalent expansions
  \begin{equation}
  \label{eq:b-eff-expanded}
  \begin{aligned}
  {\bf b}_l^{\hat{x}}
  \;=\;
  b_l
  &+{\bf W}_l^{(l,\hat{x})}\,\boldsymbol{\Gamma}_{l-1}^{\hat{x}}\,b_{l-1}\\
  &+{\bf W}_{l-1}^{(l,\hat{x})}\,\boldsymbol{\Gamma}_{l-2}^{\hat{x}}\,b_{l-2}\\
  &+\ \cdots\\
  &+{\bf W}_{2}^{(l,\hat{x})}\,\boldsymbol{\Gamma}_{1}^{\hat{x}}\,b_{1},
  \end{aligned}
  \end{equation}
  and equivalently,
  \begin{equation}
  \label{eq:b-eff-sum}
  \begin{aligned}
  {\bf b}_l^{\hat{x}}
  \;=\;
  b_l
  +\sum_{i=1}^{l-1}
  \Big(\prod_{j=i+1}^{l} w_j\,\boldsymbol{\Gamma}_{j-1}^{\hat{x}}\Big)\,b_i.
  \end{aligned}
  \end{equation}
\end{itemize}

\begin{algorithm}[tb]
\caption{Find CPAs}
\label{alg:find-cpas}
\textbf{Input}: $NN=(L,\{w_l,b_l\}_{l=1}^L)$ a PANN; an initial point $x\in A_0$\\
\textbf{Output}: $E_c$, a set of pairs $(\hat{x},A_L^{\hat{x}})$ representing all (non-empty) CPAs on $A_0$
\begin{algorithmic}[1]
\STATE Initialize $E_c\gets\emptyset$, $Q\gets\emptyset$ \COMMENT{$Q$ is a queue (or stack) of regions to explore}
\STATE PUSH $Q\leftarrow (0, x, A_0)$
\WHILE{$Q\neq\emptyset$}
    \STATE POP $(l, x_i, A_l^{\hat{x}_i}) \leftarrow Q$
    \IF{$l=L$}
        \STATE APPEND $E_c \leftarrow (\hat{x}_i, A_L^{\hat{x}_i})$
    \ELSE
        \STATE $E_s \gets \texttt{SearchSubRegions}(NN,\,l,\,A_l^{\hat{x}_i},\,x_i)$ \COMMENT{Alg.~\ref{alg:search-sub-regions}}
        \COMMENT{$E_s$ contains pairs $(\hat{x}_j, A_{l+1}^{\hat{x}_j})$}
        \FORALL{$(\hat{x}_j, A_{l+1}^{\hat{x}_j}) \in E_s$}
            \STATE PUSH $Q \leftarrow (l+1, \hat{x}_j, A_{l+1}^{\hat{x}_j})$
        \ENDFOR
    \ENDIF
\ENDWHILE
\end{algorithmic}
\end{algorithm}

\begin{algorithm}[tb]
\caption{Search sub-regions}
\label{alg:search-sub-regions}
\textbf{Input}: a PANN $NN=(L,\{w_t,b_t\}_{t=1}^L)$; layer index $l\in\{0,\dots,L-1\}$; a parent region $A_l^{\hat{x}}\subseteq\mathbb{R}^{d_0}$; and a point $x\in A_l^{\hat{x}}$\\
\textbf{Output}: $E_s$, a set of pairs $(\hat{x}_j, A^{\hat{x}_j}_{l+1})$ enumerating all non-empty sub-regions of the parent region $A_l^{\hat{x}}$
\begin{algorithmic}[1]
\STATE Initialize $E_s\gets\emptyset$ \COMMENT{output set}
\STATE Initialize $Q\gets\emptyset$ \COMMENT{queue (or stack) of regions to explore}
\STATE Initialize $V\gets\emptyset$ \COMMENT{visited set of regions (canonical constraint systems)}
\vspace{0.25em}

\STATE Compute the effective parameters $( {\bf W}_{l+1}^{\hat{x}}, {\bf b}_{l+1}^{\hat{x}} )$ using Eqs.~\eqref{eq:W-eff-chain} and \eqref{eq:b-eff-sum}
\STATE Define the neuron hyperplanes on the parent region:
$\mathcal{H} \gets \{h_i\}_{i=1}^{d_{l+1}}$ with $h_i=\{z\mid ({\bf W}_{l+1}^{\hat{x}})_i z + ({\bf b}_{l+1}^{\hat{x}})_i = 0\}$
\STATE Compute the intersecting subset
$\widehat{\mathcal{H}}\gets\{h_i\in\mathcal{H}\mid h_i\cap A_l^{\hat{x}}\neq\emptyset\}$
via LP feasibility (or min/max) checks
\vspace{0.25em}

\IF{$\widehat{\mathcal{H}}=\emptyset$}
    \STATE $A_{l+1}^{\hat{x}} \gets A_l^{\hat{x}}$
    \STATE \textbf{return} $E_s=\{(\hat{x},A_{l+1}^{\hat{x}})\}$ \COMMENT{no split occurs}
\ENDIF
\vspace{0.25em}

\STATE Compute the child region containing $x$ using Eq.~(\ref{(10)}):
$A_{l+1}^{x}\gets \texttt{RegionFromSign}(x,\,A_l^{\hat{x}},\,{\bf W}_{l+1}^{\hat{x}},\,{\bf b}_{l+1}^{\hat{x}})$
\STATE PUSH $Q\leftarrow A_{l+1}^{x}$
\vspace{0.25em}

\WHILE{$Q\neq\emptyset$}
    \STATE POP $A_{l+1} \leftarrow Q$
    \STATE Compute an interior representative $\hat{x}_i \gets \texttt{chebyshev\_center}(A_{l+1})$
    \COMMENT{$\hat{x}_i\in A_{l+1}$ (interior if feasible with $\epsilon>0$)}
    \STATE Canonicalize constraints: $\widetilde{A}_{l+1} \gets \texttt{remove\_redundancy}(A_{l+1})$
    \COMMENT{$\widetilde{A}_{l+1}=A_{l+1}$ as sets, with fewer inequalities}
    \IF{$\widetilde{A}_{l+1}\in V$}
        \STATE \textbf{continue}
    \ENDIF
    \STATE INSERT $V \leftarrow \widetilde{A}_{l+1}$
    \STATE APPEND $E_s \leftarrow (\hat{x}_i,\,\widetilde{A}_{l+1})$
    \vspace{0.25em}

    \STATE Compute all feasible, previously unseen neighbor regions across facets induced by $\widehat{\mathcal{H}}$:
    \STATE $\mathcal{N}\gets \texttt{Neighbors}(\widetilde{A}_{l+1},\,A_l^{\hat{x}},\,\widehat{\mathcal{H}})$
    \FORALL{$A' \in \mathcal{N}$}
        \STATE PUSH $Q \leftarrow A'$
    \ENDFOR
\ENDWHILE
\STATE \textbf{return} $E_s$
\end{algorithmic}
\end{algorithm}

\paragraph{Affine region induced by layer $l$}
Given a reference point $\hat{x}\in A_0$, the layer-$l$ region in input space is defined as the subset of the parent region $A_{l-1}^{\hat{x}}$ on which the pre-activation sign pattern at layer $l$ matches that of $\hat{x}$:
\begin{equation}
\label{(10)}
A_{l}^{\hat{x}}
\;\triangleq\;
\left\{x\in A_{l-1}^{\hat{x}}\ \middle|\ 
\operatorname{diag}\!\big(\operatorname{sgn}(f^{\hat{x}}_l(\hat{x}))\big)\,f^{\hat{x}}_l(x)\ge 0
\right\}.
\end{equation}
Equivalently, Eq.~\eqref{(10)} enforces $\operatorname{sgn}(f^{\hat{x}}_l(x))=\operatorname{sgn}(f^{\hat{x}}_l(\hat{x}))$ element-wise for all $x\in A_{l}^{\hat{x}}$, and thus fixes the activation pattern up to layer $l$.
In our enumeration, we count \emph{full-dimensional} regions and treat the boundaries (where equalities hold) as measure-zero; representative points are always chosen from the relative interior, so the induced sign pattern is unambiguous.

\begin{theorem}[Affine realization under a fixed activation pattern]
\label{thm:affine-realization}
Fix $\hat{x}\in A_0$ and a layer index $l\in\{1,\dots,L\}$.
Let the activation slopes be piecewise-constant as in Eq.~\eqref{(8)}, and define the diagonal slope matrices
$\boldsymbol{\Gamma}_t^{\hat{x}}$ by Eq.~\eqref{eq:D-def}.
Then the layer-$l$ pre-activation admits the affine representation in Eq.~\eqref{(5)} on $A_{l-1}^{\hat{x}}$,
with effective parameters ${\bf W}_l^{\hat{x}}$ and ${\bf b}_l^{\hat{x}}$ given by
Eqs.~\eqref{eq:W-eff-chain} and \eqref{eq:b-eff-sum}.
Moreover, the induced set $A_l^{\hat{x}}$ in Eq.~\eqref{(10)} is a closed polytope in input space.
On every maximal region $A_L^{\hat{x}}$, the full network mapping $NN$ is affine in $x$.
\end{theorem}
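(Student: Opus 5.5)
The plan is induction on the layer index $l$, proving three statements together: (i) $f_l^{\hat{x}}(x)=\mathbf{W}_l^{\hat{x}}x+\mathbf{b}_l^{\hat{x}}$ for all $x\in A_{l-1}^{\hat{x}}$, with the parameters of Eqs.~\eqref{eq:W-eff-chain} and~\eqref{eq:b-eff-sum}; (ii) $A_l^{\hat{x}}$ is a closed polytope; and (iii) for $l\le L-1$ and every $x$ in the interior of $A_l^{\hat{x}}$, the hidden state satisfies
\[
x_l=\boldsymbol{\Gamma}_l^{\hat{x}}\big(\mathbf{W}_l^{\hat{x}}x+\mathbf{b}_l^{\hat{x}}\big).
\]
The base case $l=1$ is immediate. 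We have $f_1(x)=w_1x+b_1$ on $A_0$, which matches $\mathbf{W}_1^{\hat{x}}=w_1$ and $\mathbf{b}_1^{\hat{x}}=b_1$. The set $A_1^{\hat{x}}$ is $A_0$ intersected with $d_1$ closed half-spaces $s_i\big((w_1)_i x+(b_1)_i\big)\ge 0$, where $s_i=\operatorname{sgn}\big(f_1(\hat{x})_i\big)$. A finite intersection of closed half-spaces is a closed polyhedron. It is bounded because it is contained in the bounded set $A_0$, so it is a closed polytope.

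For the inductive step, assume (i)--(iii) hold at layer $l-1$. On the interior of $A_{l-1}^{\hat{x}}$, each coordinate of $f_{l-1}^{\hat{x}}(x)$ has the same strict sign as at $\hat{x}$, so $\alpha\big(f_{l-1}^{\hat{x}}(x)\big)=\alpha\big(f_{l-1}^{\hat{x}}(\hat{x})\big)$ coordinatewise. Since $\sigma(t)=\alpha(t)\,t$, this gives $x_{l-1}=\boldsymbol{\Gamma}_{l-1}^{\hat{x}}\big(\mathbf{W}_{l-1}^{\hat{x}}x+\mathbf{b}_{l-1}^{\hat{x}}\big)$. Applying $T_l$ then yields the recursions
\[
\mathbf{W}_l^{\hat{x}}=w_l\boldsymbol{\Gamma}_{l-1}^{\hat{x}}\mathbf{W}_{l-1}^{\hat{x}},\qquad
\mathbf{b}_l^{\hat{x}}=b_l+w_l\boldsymbol{\Gamma}_{l-1}^{\hat{x}}\mathbf{b}_{l-1}^{\hat{x}}.
\]
Unrolling these recursions gives the product formula~\eqref{eq:W-eff-chain}. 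For the bias, a routine telescoping (re-indexing the product $\prod_{j=i+1}^{l}$) gives the sum formula~\eqref{eq:b-eff-sum}.

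The main obstacle is boundary points of $A_{l-1}^{\hat{x}}$. If $\hat{x}$ sits exactly on a hyperplane, the sign convention $\operatorname{sgn}(0)=-1$ makes the slope choice ambiguous there. To resolve this, I first note that both sides of the affine identity in (i) are continuous in $x$: the network is continuous and the right-hand side is affine. The identity therefore extends from the interior to its closure. By the paper's full-dimensionality convention, $A_{l-1}^{\hat{x}}$ is the closure of its interior. On hyperplanes where $f_{l-1}^{\hat{x}}(x)_i=0$, both slopes give $\sigma=0$, so the choice is irrelevant there. Both facts follow from the standing assumption that representatives lie in the relative interior, which I would state explicitly.

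Given (i), $A_l^{\hat{x}}$ is $A_{l-1}^{\hat{x}}$ intersected with $d_l$ closed half-spaces, each of the form $s_i\big((\mathbf{W}_l^{\hat{x}})_i x+(\mathbf{b}_l^{\hat{x}})_i\big)\ge 0$. These are linear in $x$, so $A_l^{\hat{x}}$ is again a closed polytope, and boundedness is inherited from $A_0$. Finally, take $l=L$. On $A_{L-1}^{\hat{x}}$, and hence on its subset $A_L^{\hat{x}}$, we have
\[
NN(x)=T_L(x_{L-1})=\mathbf{W}_L^{\hat{x}}x+\mathbf{b}_L^{\hat{x}},
\]
since no activation follows $T_L$. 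So $NN$ is affine on every maximal region $A_L^{\hat{x}}$.
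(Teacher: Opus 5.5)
Your proof is correct and is essentially the paper's argument. Both proceed by induction on $l$ through the recursion $\mathbf{W}_l^{\hat{x}}=w_l\boldsymbol{\Gamma}_{l-1}^{\hat{x}}\mathbf{W}_{l-1}^{\hat{x}}$, $\mathbf{b}_l^{\hat{x}}=b_l+w_l\boldsymbol{\Gamma}_{l-1}^{\hat{x}}\mathbf{b}_{l-1}^{\hat{x}}$, unroll it, intersect with closed half-spaces while inheriting boundedness from $A_0$, and read off $NN=T_L(x_{L-1})$ on $A_{L-1}^{\hat{x}}\supseteq A_L^{\hat{x}}$.

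Your boundary handling is more careful than the paper's, which simply places $\hat{x}$ in the relative interior, but the continuity detour is unnecessary. Since $\operatorname{sgn}(0)=-1$ selects the same slope $b$ that $\sigma$ uses on nonpositive inputs, and $\sigma(0)=0$ on either branch, your identity $x_t=\boldsymbol{\Gamma}_t^{\hat{x}}\big(\mathbf{W}_t^{\hat{x}}x+\mathbf{b}_t^{\hat{x}}\big)$ already holds at every point of the closed set $A_t^{\hat{x}}$. So the ambiguity you worry about never arises, and the theorem holds for arbitrary $\hat{x}\in A_0$.
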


\paragraph{Representative set and region counting}
We maintain a representative set of points $\widehat{X}_l\subseteq A_0$ such that each layer-$l$ affine region has exactly one representative:
\begin{equation}
\label{(11)}
\begin{aligned}
&\widehat{X}_l\subseteq A_0,\qquad
A_l^{x_i}\neq A_l^{x_j}\ \ (x_i\neq x_j,\ x_i,x_j\in\widehat{X}_l),\\
&\forall \hat{x}\in A_0,\ \exists x_i\in \widehat{X}_l\ \text{s.t.}\ A_l^{\hat{x}}=A_l^{x_i}.
\end{aligned}
\end{equation}
Therefore, the number of distinct affine regions at layer $l$ is $|\widehat{X}_l|$.

\paragraph{Hyperplane set at layer $l$}
Within $A_{l-1}^{\hat{x}}$, each neuron in layer $l$ induces a hyperplane where its pre-activation equals zero. Denote the set of such hyperplanes by
$\mathcal{H}_l^{\hat{x}}=\{h_{l,i}^{\hat{x}}\}_{i=1}^{d_l}$
where
\begin{equation}
\label{(12)}
h_{l,i}^{\hat{x}}=\left\{x\ \middle|\ {\bf W}^{\hat{x}}_{l,i}x+{\bf b}^{\hat{x}}_{l,i}=0\right\}.
\end{equation}
Here ${\bf W}^{\hat{x}}_{l,i}$ denotes the $i$-th row of ${\bf W}^{\hat{x}}_l$ and ${\bf b}^{\hat{x}}_{l,i}$ denotes the $i$-th component of ${\bf b}^{\hat{x}}_l$.

\begin{proof}
We first fix notation. For an input $x\in\mathbb{R}^{d_0}$, let $x_0(x)\triangleq x$, and define the
\emph{pre-activation} and \emph{post-activation} at layer $l$ by
\begin{equation}
z_l(x)\;\triangleq\; w_l\,x_{l-1}(x)+b_l,\qquad l=1,\dots,L,
\end{equation}
and, for $l=1,\dots,L-1$,
\begin{equation}
x_l(x)\;\triangleq\;\sigma\!\big(z_l(x)\big).
\end{equation}
(There is no activation after the last affine map $T_L$, so $NN(x)=z_L(x)$.) The activation is coordinate-wise
CPA with two slopes, i.e., $\sigma(t)=\alpha(t)\,t$ applied element-wise.

Fix a reference point $\hat{x}\in A_0$, and consider the region $A_{l-1}^{\hat{x}}$ induced by the sign pattern
up to layer $l-1$. Throughout, we take $\hat{x}$ (and all representatives) from the relative interior of its
region, so no relevant pre-activation coordinate is exactly zero on $\hat{x}$.
For each $t=1,\dots,l-1$, the sign pattern of $z_t(x)$ is constant over $A_{l-1}^{\hat{x}}$; hence the slope
selector $\alpha(\cdot)$ is constant over $A_{l-1}^{\hat{x}}$ coordinate-wise. Equivalently, the diagonal
matrices $\boldsymbol{\Gamma}_t^{\hat{x}}$ defined in \eqref{eq:D-def} are constant on $A_{l-1}^{\hat{x}}$, and
\begin{equation}
\begin{aligned}
x_t(x)
&=\sigma\!\big(z_t(x)\big)
=\boldsymbol{\Gamma}_t^{\hat{x}}\,z_t(x),
\qquad \forall x\in A_{l-1}^{\hat{x}},\\
&\hspace{7.6em} t=1,\dots,l-1.
\end{aligned}
\end{equation}

\medskip
\noindent\textbf{(1) Affine form of the layer-$l$ pre-activation and closed-form effective parameters.}
We prove by induction on $l$ that on $A_{l-1}^{\hat{x}}$ there exist matrices $\mathbf{W}_l^{\hat{x}}$ and
vectors $\mathbf{b}_l^{\hat{x}}$ such that
\begin{equation}\label{eq:zl-affine-proof}
z_l(x)= f_l^{\hat{x}}(x)=\mathbf{W}_l^{\hat{x}}x+\mathbf{b}_l^{\hat{x}},
\qquad \forall x\in A_{l-1}^{\hat{x}}.
\end{equation}

\emph{Base case ($l=1$).}
By definition, $z_1(x)=w_1x+b_1$, so we may set $\mathbf{W}_1^{\hat{x}}=w_1$ and $\mathbf{b}_1^{\hat{x}}=b_1$.

\emph{Inductive step.}
Assume \eqref{eq:zl-affine-proof} holds at layer $l-1\ge 1$, i.e., for all $x\in A_{l-2}^{\hat{x}}$,
\begin{equation}
z_{l-1}(x)=\mathbf{W}_{l-1}^{\hat{x}}x+\mathbf{b}_{l-1}^{\hat{x}}.
\end{equation}
Since $A_{l-1}^{\hat{x}}\subseteq A_{l-2}^{\hat{x}}$ and the sign pattern at layer $l-1$ is fixed over
$A_{l-1}^{\hat{x}}$, the slope matrix $\boldsymbol{\Gamma}_{l-1}^{\hat{x}}$ is constant on $A_{l-1}^{\hat{x}}$
and
\begin{equation}
x_{l-1}(x)=\sigma\!\big(z_{l-1}(x)\big)=\boldsymbol{\Gamma}_{l-1}^{\hat{x}}\,z_{l-1}(x),
\qquad \forall x\in A_{l-1}^{\hat{x}}.
\end{equation}
Therefore, for all $x\in A_{l-1}^{\hat{x}}$,
\begin{equation}
\begin{aligned}
z_l(x)
&= w_l\,x_{l-1}(x)+b_l\\
&= w_l\,\boldsymbol{\Gamma}_{l-1}^{\hat{x}}\,z_{l-1}(x)+b_l\\
&= w_l\,\boldsymbol{\Gamma}_{l-1}^{\hat{x}}\big(\mathbf{W}_{l-1}^{\hat{x}}x+\mathbf{b}_{l-1}^{\hat{x}}\big)+b_l\\
&= \big(w_l\,\boldsymbol{\Gamma}_{l-1}^{\hat{x}}\mathbf{W}_{l-1}^{\hat{x}}\big)x
 + \big(b_l+w_l\,\boldsymbol{\Gamma}_{l-1}^{\hat{x}}\mathbf{b}_{l-1}^{\hat{x}}\big).
\end{aligned}
\end{equation}
Hence we may define
\begin{equation}\label{eq:Wb-rec-proof}
\mathbf{W}_l^{\hat{x}} \triangleq w_l\,\boldsymbol{\Gamma}_{l-1}^{\hat{x}}\mathbf{W}_{l-1}^{\hat{x}},
\qquad
\mathbf{b}_l^{\hat{x}} \triangleq b_l+w_l\,\boldsymbol{\Gamma}_{l-1}^{\hat{x}}\mathbf{b}_{l-1}^{\hat{x}},
\end{equation}
which establishes \eqref{eq:zl-affine-proof} for layer $l$ and completes the induction.

Unrolling the recursion \eqref{eq:Wb-rec-proof} yields, for $l\ge 2$,
\begin{equation}
\mathbf{W}_l^{\hat{x}}
=
w_l\,\boldsymbol{\Gamma}_{l-1}^{\hat{x}}\,w_{l-1}\,\boldsymbol{\Gamma}_{l-2}^{\hat{x}}\cdots
w_2\,\boldsymbol{\Gamma}_1^{\hat{x}}\,w_1,
\end{equation}
which is \eqref{eq:W-eff-chain}, and repeated substitution of the bias recursion yields
\begin{equation}
\mathbf{b}_l^{\hat{x}}
=
b_l
+\sum_{i=1}^{l-1}
\Big(\prod_{j=i+1}^{l} w_j\,\boldsymbol{\Gamma}_{j-1}^{\hat{x}}\Big)\,b_i,
\end{equation}
which is \eqref{eq:b-eff-sum}.

\medskip
\noindent\textbf{(2) Polyhedrality (and boundedness) of $A_l^{\hat{x}}$.}
By definition \eqref{(10)},
\begin{equation}
A_l^{\hat{x}}
=
\left\{x\in A_{l-1}^{\hat{x}}\ \middle|\
\operatorname{diag}\!\big(\operatorname{sgn}(f_l^{\hat{x}}(\hat{x}))\big)\,f_l^{\hat{x}}(x)\ge 0
\right\}.
\end{equation}
On $A_{l-1}^{\hat{x}}$, Part (1) gives $f_l^{\hat{x}}(x)=\mathbf{W}_l^{\hat{x}}x+\mathbf{b}_l^{\hat{x}}$, hence
\begin{equation}
A_l^{\hat{x}}
=
A_{l-1}^{\hat{x}}
\cap
\bigcap_{i=1}^{d_l}
\left\{
x\ \middle|\
\operatorname{sgn}\!\big(f_{l,i}^{\hat{x}}(\hat{x})\big)\,
\big(\mathbf{W}_{l,i}^{\hat{x}}x+\mathbf{b}_{l,i}^{\hat{x}}\big)\ge 0
\right\}.
\end{equation}
Thus $A_l^{\hat{x}}$ is an intersection of finitely many closed halfspaces with $A_{l-1}^{\hat{x}}$, and by
induction from the H-representation \eqref{(4)} of $A_0$, it follows that $A_l^{\hat{x}}$ is a closed polyhedron
in input space. Moreover, $A_l^{\hat{x}}\subseteq A_0$ and $A_0$ is bounded by assumption, so $A_l^{\hat{x}}$ is
bounded; therefore $A_l^{\hat{x}}$ is a closed polytope.

\medskip
\noindent\textbf{(3) Affine realization of $NN$ on maximal regions.}
Let $A_L^{\hat{x}}$ denote any leaf (maximal) region produced by fixing the activation pattern through layer
$L-1$ (as in the main text, such regions are the CPAs returned by the enumeration procedure). On $A_L^{\hat{x}}$,
all slope matrices $\boldsymbol{\Gamma}_t^{\hat{x}}$ for $t=1,\dots,L-1$ are constant. Applying Part (1) with
$l=L$ gives that the last pre-activation $z_L(x)$ is affine in $x$ on $A_{L-1}^{\hat{x}}$, and hence also on its
subset $A_L^{\hat{x}}$. Since $NN(x)=z_L(x)$ (no activation after layer $L$), it follows that $NN$ is affine in
$x$ on $A_L^{\hat{x}}$.

Combining Parts (1)--(3) proves the Theorem~\ref{thm:affine-realization}.
\end{proof}

\begin{theorem}[Cell decomposition and BFS completeness]
\label{thm:bfs-completeness}
Let $P\subset\mathbb{R}^{d_0}$ be a non-empty \emph{full-dimensional} convex polytope (in our setting, $P=A_l^{\hat{x}}$),
and let $\widehat{\mathcal H}=\{h_i\}_{i=1}^{m}$ be the set of affine hyperplanes that intersect $P$.
Write each hyperplane as $h_i=\{x\mid g_i(x)=0\}$ where $g_i$ is affine.

For $\boldsymbol{\eta}\in\{\pm1\}^m$, define the (open) arrangement cell inside $P$ by
\begin{equation}
\mathcal{C}^\circ(\boldsymbol{\eta})
\;\triangleq\;
P \cap \bigcap_{i=1}^{m} \left\{ x \ \middle|\ \eta_i\,g_i(x) > 0\right\}.
\end{equation}
Then:
\begin{enumerate}
\item The non-empty \emph{full-dimensional} sets $\{\mathcal{C}^\circ(\boldsymbol{\eta})\}$ are exactly the connected components of
\begin{equation}
U \;\triangleq\; P\setminus \bigcup_{i=1}^{m}(h_i\cap P).
\end{equation}
and hence they partition $P$ up to a boundary set of Lebesgue measure zero.
\item Let $G$ be the adjacency graph whose nodes are the non-empty full-dimensional cells $\mathcal{C}^\circ(\boldsymbol{\eta})$,
and where two nodes are connected if the \emph{closures} of the corresponding cells share a common facet contained in $h_i\cap P$
for some $i$.
Then $G$ is connected. Consequently, starting from any interior point $x\in P$ and repeatedly crossing feasible
hyperplane-induced facets enumerates all non-empty full-dimensional cells in $P$.
\end{enumerate}
\end{theorem}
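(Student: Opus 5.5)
For Part 1, I will first show that each non-empty cell $\mathcal{C}^\circ(\boldsymbol{\eta})$ is convex, hence connected. It is the intersection of the convex set $P$ with finitely many open halfspaces $\{\eta_i g_i>0\}$.

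Next I will show the cells cover $U$ and are pairwise disjoint. Every $x\in U$ has $g_i(x)\neq 0$ for all $i$, so it lies in exactly one cell, namely the one with $\eta_i=\operatorname{sgn}(g_i(x))$. Each cell is also relatively open in $U$, since the condition is a finite intersection of open conditions. So $U$ is a disjoint union of relatively open connected sets. Each cell is then closed in $U$ too, as its complement is a union of open cells, and therefore each cell is a connected component.

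For full-dimensionality, a non-empty cell contains a point of $\mathrm{int}\,P$. Indeed, a point in the cell but in $\partial P$ can be moved slightly toward an interior point of $P$ while keeping all strict inequalities. A small ball around such a point lies in the cell, so the cell has positive volume. Conversely, any open-in-$U$ piece is among these cells.

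To get the partition up to measure zero, I will note that $P\setminus U\subseteq\bigcup_i h_i$. Each $h_i$ is a genuine hyperplane (I assume $g_i$ is non-constant, otherwise $h_i$ would be empty or the whole space), so this set has Lebesgue measure zero.

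For Part 2, the plan is a path argument. Take two non-empty cells $C,C'$. Pick interior points $x\in C\cap\mathrm{int}\,P$ and $y\in C'\cap\mathrm{int}\,P$, perturbing them within their open cells as needed. I want the segment $[x,y]$ to meet no pairwise intersection $h_i\cap h_j$ (for $h_i\neq h_j$) and to cross each hyperplane transversally. The set of "bad" pairs $(x,y)$ has measure zero, because $h_i\cap h_j$ has codimension $\ge 2$, so segments hitting it form a lower-dimensional family. Since $C\times C'$ is open and non-empty, a good pair exists.

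The segment lies in $\mathrm{int}\,P$ by convexity. It meets the hyperplanes at finitely many parameters $0<t_1<\dots<t_k<1$, each belonging to exactly one hyperplane. If several $h_i$ coincide, I merge them; they define the same facet. Between consecutive crossings the segment stays in a single cell, because signs are constant.

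At a crossing point $p$, only one hyperplane $h_i$ passes through $p$, and $p$ is interior to $P$. So a small ball $B$ around $p$ meets no other hyperplane and no facet of $P$. Therefore $B\cap h_i$ is a $(d_0-1)$-dimensional patch lying in the closures of both neighboring cells, which differ only in $\eta_i$. Hence their common boundary $\overline{C_a}\cap\overline{C_b}\cap h_i$ is $(d_0-1)$-dimensional, i.e.\ a shared facet in $h_i\cap P$, and the two cells are adjacent in $G$. Chaining these adjacencies connects $C$ to $C'$.

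The enumeration consequence then follows. BFS over a connected graph, started at the cell containing the initial point, visits every node. The neighbor step in Alg.~\ref{alg:search-sub-regions} tests, via LP, every facet of the current cell lying on some $h\in\widehat{\mathcal H}$. So every $G$-edge out of a visited node is examined.

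The main obstacle is the genericity step: justifying carefully that a segment can be chosen avoiding all codimension-2 intersections while keeping its endpoints in the prescribed open cells. I would handle it by fixing $x$ and noting that the set of $y$ for which $[x,y]$ meets $h_i\cap h_j$ lies in the cone from $x$ over $h_i\cap h_j$. That cone is contained in an affine subspace of dimension $\le d_0-1$, hence has measure zero. A finite union of such sets cannot cover the open set $C'$.
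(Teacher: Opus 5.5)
Your proposal is correct and follows essentially the same route as the paper. In both, sign vectors plus convexity identify the non-empty cells with the connected components of $U$, and a generic segment between two cells, crossing the hyperplanes one at a time, gives a path in $G$. Your version is also more careful than the paper's in four places. You justify the genericity perturbation via the measure-zero cones over $h_i\cap h_j$, and you prove that every non-empty cell is full-dimensional. You show that the common boundary at each crossing is genuinely $(d_0-1)$-dimensional. You also handle coincident hyperplanes, which the paper's claim that each crossing ``flips exactly one coordinate'' overlooks.
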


\begin{proof}
Let $P\subset\mathbb{R}^{d_0}$ be a non-empty full-dimensional convex polytope and
$\widehat{\mathcal{H}}=\{h_i\}_{i=1}^m$ be the set of affine hyperplanes intersecting $P$, where each
\begin{equation}
h_i=\{x\mid g_i(x)=0\},
\end{equation}
and $g_i$ is affine. For $\boldsymbol{\eta}\in\{\pm 1\}^m$, define the (open) cell
\begin{equation}
\mathcal{C}^\circ(\boldsymbol{\eta})
\;\triangleq\;
P \cap \bigcap_{i=1}^{m} \left\{ x \ \middle|\ \eta_i\,g_i(x) > 0\right\}.
\end{equation}

\medskip
\noindent\textbf{(1) Cells coincide with the connected components of $U$.}
Define the hyperplane-removed set
\begin{equation}
U \;\triangleq\; P\setminus \bigcup_{i=1}^{m}(h_i\cap P).
\end{equation}
For any $x\in U$, we have $g_i(x)\neq 0$ for all $i$, hence we can define the sign vector
\begin{equation}
\boldsymbol{\eta}(x)\in\{\pm 1\}^m,
\qquad
\eta_i(x)\triangleq \operatorname{sgn}(g_i(x)).
\end{equation}
By construction, $x\in\mathcal{C}^\circ(\boldsymbol{\eta}(x))$, and different sign vectors produce disjoint sets.
Therefore,
\begin{equation}
U \;=\; \bigsqcup_{\boldsymbol{\eta}\in\{\pm 1\}^m}\mathcal{C}^\circ(\boldsymbol{\eta}),
\end{equation}
where the union ranges effectively over the non-empty cells.

Each $\mathcal{C}^\circ(\boldsymbol{\eta})$ is the intersection of $P$ with strict halfspaces, hence it is convex.
Consequently, whenever $\mathcal{C}^\circ(\boldsymbol{\eta})$ is non-empty, it is connected.

Now let $\gamma:[0,1]\to U$ be any continuous path. For each $i$, the scalar function
$t\mapsto g_i(\gamma(t))$ is continuous and never equals $0$ on $[0,1]$, hence its sign is constant on
$[0,1]$. Thus the sign vector $\boldsymbol{\eta}(\gamma(t))$ is constant along $\gamma$, implying that no
connected subset of $U$ can intersect two different cells. Therefore, each connected component of $U$ is
contained in a single cell, and each non-empty cell is contained in a connected component.

Combining the two inclusions above and using that each non-empty $\mathcal{C}^\circ(\boldsymbol{\eta})$ is
connected, it follows that the non-empty cells $\mathcal{C}^\circ(\boldsymbol{\eta})$ are exactly the connected
components of $U$.

Finally, $\bigcup_i(h_i\cap P)$ is a finite union of intersections of $P$ with affine hyperplanes, hence it has
Lebesgue measure zero in $\mathbb{R}^{d_0}$. Therefore, the full-dimensional cells partition $P$ up to a boundary
set of measure zero.

\medskip
\noindent\textbf{(2) Connectivity of the adjacency graph and BFS completeness.}
Let $G$ be the adjacency graph whose nodes are the non-empty full-dimensional cells, and where two nodes are
connected if the closures of the corresponding cells share a common facet contained in $h_i\cap P$ for some $i$.

Take any two non-empty full-dimensional cells $C_1=\mathcal{C}^\circ(\boldsymbol{\eta})$ and
$C_2=\mathcal{C}^\circ(\boldsymbol{\xi})$, and choose points $x\in C_1$ and $y\in C_2$.
Consider the line segment
\begin{equation}
\gamma(t)\;=\;(1-t)x+ty,\qquad t\in[0,1].
\end{equation}
By convexity of $P$, $\gamma(t)\in P$ for all $t$.

For each $i$, the function $t\mapsto g_i(\gamma(t))$ is affine in $t$, hence it either has at most one zero in
$(0,1)$ or is identically zero. Since $x,y\in U$, we have $g_i(x)\neq 0$ and $g_i(y)\neq 0$, so it cannot be
identically zero. Therefore, the set of crossing parameters
\begin{equation}
S \;\triangleq\; \{t\in(0,1)\mid \exists i,\ g_i(\gamma(t))=0\}
\end{equation}
is finite.

If at some $t^\star\in S$ the segment intersects two or more hyperplanes simultaneously, then
$\gamma(t^\star)$ lies in a lower-dimensional intersection of hyperplanes. Choose $y$ in the relative interior
of $C_2$; then for sufficiently small perturbations within $\mathrm{aff}(P)$ there exists $y'\in C_2\subseteq P$
such that the segment from $x$ to $y'$ intersects the hyperplanes transversely, i.e., it crosses one hyperplane
at a time and never hits an intersection of codimension at least $2$. Replacing $y$ by $y'$ does not change the
target cell, so we may assume without loss of generality that the segment crosses the hyperplanes one-by-one.

Under this genericity assumption, as $t$ increases from $0$ to $1$, the sign vector of
\[
\big(g_1(\gamma(t)),\dots,g_m(\gamma(t))\big)
\]
changes by flipping exactly one coordinate at each crossing.
Hence the segment passes through a finite sequence of full-dimensional cells
\begin{equation}
C_1 = D_0, D_1, \dots, D_K = C_2,
\end{equation}
where consecutive cells $D_{k-1}$ and $D_k$ are separated by a facet lying in $h_{i_k}\cap P$ for some $i_k$.
By definition of $G$, this yields an edge between $D_{k-1}$ and $D_k$. Therefore there exists a path in $G$
from $C_1$ to $C_2$, proving that $G$ is connected.

Since $G$ is connected, a breadth-first search starting from any node and repeatedly visiting all feasible,
previously unseen neighbors across shared facets will eventually visit every node of $G$. Thus, starting from
any interior point in $P$ and repeatedly crossing feasible hyperplane-induced facets enumerates all non-empty
full-dimensional cells in $P$.
\end{proof}

We introduce a subset $\widehat{\mathcal{H}}_l^{\hat{x}}$ to represent the collection of hyperplanes intersecting with $A_{l-1}^{\hat{x}}$. The complement of $\widehat{\mathcal{H}}_l^{\hat{x}}$ corresponds to neurons that remain either entirely activated or deactivated for any input $x\in A_{l-1}^{\hat{x}}$ (Fig.~\ref{fig:f2}(b)). Consequently, we focus on the neurons in an intermediate state and seek to identify subregions formed by the intersections of these hyperplanes and the parent region. Our approach begins with an initial input space $A_{0}$ and a point $x$ within it. Using Alg.~\ref{alg:search-sub-regions}, we compute the hyperplane set $\mathcal{H}_1^{\hat{x}}$ formed by the neurons in the first layer of a given PANN over the input space $A_{0}$. To search for the subset $\widehat{\mathcal{H}}_1^{\hat{x}}$, we reformulate the task as a linear programming problem. Thus, all subregions can be represented by $\widehat{\mathcal{H}}_1^{\hat{x}}$ and $A_{0}$, effectively reducing the search space for sub-regions. Since $x$ belongs to the parent region, there must exist a sub-region $A_{1}^{x}$ that belongs to $A_{0}$. Using this sub-region as the starting point for a breadth-first search, we compute the representative $\hat{x}_i$ for this region and explore adjacent subregions along its edges. We also reformulate the edge search as a linear programming problem, as before. Through Alg.~\ref{alg:search-sub-regions}, we obtain the complete set of sub-regions of $A_{0}$, and repeat this for all layers up to $l=L$. We provide pseudocode for the algorithm, which is also applicable to other neural network structures that can be represented through affine functions. For instance, a 2D convolution can be written as a sparse linear map, hence as a special case of an affine layer after vectorization \cite{book34}.

\paragraph{Layer-by-layer enumeration (Alg.~\ref{alg:find-cpas})}
Alg.~\ref{alg:find-cpas} is the outer orchestration procedure that enumerates all final-layer CPAs on the calibrated input polytope $A_0$ by repeatedly invoking Alg.~\ref{alg:search-sub-regions} in a layer-wise manner.
It maintains a queue $Q$ of tuples $(l, x_i, A_l^{\hat{x}_i})$, where $l$ is the current depth, $A_l^{\hat{x}_i}$ is a discovered layer-$l$ affine region in input space, and $x_i$ is an interior representative point of this region.
Starting from $(0, x, A_0)$ with any $x\in A_0$, the algorithm pops one tuple at a time.
If $l=L$, the region is already a maximal CPA region and is appended to the output set $E_c$ as $(\hat{x}_i, A_L^{\hat{x}_i})$.
Otherwise, it calls \texttt{SearchSubRegions} on the current parent region $A_l^{\hat{x}_i}$ to enumerate all non-empty child regions $\{A_{l+1}^{\hat{x}_j}\}$ induced by layer $l\!+\!1$, each paired with a newly computed interior representative $\hat{x}_j$.
These children are then pushed back into $Q$ as $(l+1,\hat{x}_j,A_{l+1}^{\hat{x}_j})$, which corresponds to advancing one layer deeper. Importantly, \texttt{SearchSubRegions} canonicalizes region constraints (e.g., via redundancy removal) and only returns previously unseen regions, ensuring that each layer-$l$ region is explored at most once.
Consequently, Alg.~\ref{alg:find-cpas} performs an exhaustive traversal of the implicit region tree from $A_0$ to depth $L$, and returns all non-empty maximal affine regions on $A_0$.

\subsection{Correctness of AffineLens}
\label{sec:correctness}

We give a concise justification for the correctness of Alg.~\ref{alg:search-sub-regions} and
Alg.~\ref{alg:find-cpas}. We consider full-dimensional regions; boundary cases where some pre-activation equals
$0$ have measure zero and do not affect the CPA partition.

\paragraph{Key fact (finite polyhedral partition)}
For CPA networks, under any fixed sign pattern, the network (and each layer pre-activation) is affine.
Thus each layer-$l$ region $A_l^{\hat{x}}$ is defined by finitely many linear inequalities (Eq.~\eqref{(10)}),
hence is a polytope. Intersecting a parent polytope with finitely many neuron hyperplanes produces a finite
set of polyhedral sub-regions.

\paragraph{Soundness}
Alg.~\ref{alg:search-sub-regions} only outputs regions constructed from explicit linear constraints:
(i) the parent constraints (from $A_l^{\hat{x}}$) and (ii) a consistent set of sidedness constraints induced by
hyperplanes in $\widehat{\mathcal{H}}$ (those intersecting the parent region).
Each time a “new region” is added, it is obtained via a feasibility check (LP) and therefore is non-empty.
Hence every returned $(\hat{x}_i, A_{l+1}^{\hat{x}_i})$ is indeed a valid sub-region of the parent and corresponds
to a valid activation pattern at layer $l\!+\!1$.

\paragraph{Completeness}
Inside a convex parent region, the intersecting hyperplanes $\widehat{\mathcal{H}}$ induce a subdivision into
cells whose adjacencies occur across shared facets (hyperplane boundaries).
Starting from the cell containing the initial point $x$, the algorithm explores neighbor cells by crossing
facets and adding any feasible, previously unseen cell into the BFS queue.
Because the adjacency graph of cells within a convex polytope is connected, BFS visits all cells.
Therefore Alg.~\ref{alg:search-sub-regions} enumerates all sub-regions of the parent region.
Alg.~\ref{alg:find-cpas} applies this argument layer-by-layer, so all final-layer (maximal) affine regions on
$A_0$ are eventually discovered.

\paragraph{Termination}
At each layer, a bounded polytope intersected with finitely many hyperplanes yields only finitely many
full-dimensional cells. The algorithms maintain queues/sets and insert a region only when it has not been
seen before. Hence only finitely many regions can be processed at every layer, and both BFS procedures
terminate.

\subsection{Complexity of AffineLens}
\label{sec:complexity}

We analyze the worst-case complexity of AffineLens on a bounded calibrated input polytope $A_0\subset\mathbb{R}^{d_0}$.
Unless stated otherwise, we consider full-dimensional regions; boundary degeneracies have measure zero.

\paragraph{Per-layer combinatorics}
Fix a parent region in $\mathbb{R}^{d_0}$ and consider the arrangement induced by $m_l$ hyperplanes restricted to that bounded convex set.
The number of full-dimensional cells is at most
\begin{equation}
N_l(m_l,d_0)\;\le\;\sum_{i=0}^{d_0}\binom{m_l}{i}
\;=\;O\!\big((m_l+1)^{d_0}\big),
\label{eq:Nl}
\end{equation}
and therefore $N_l(d_l,d_0)$ upper bounds the worst case when $m_l$ is replaced by $d_l$.
Moreover, when $m_l\ge 1$ and $d_0$ is fixed, $\sum_{i=0}^{d_0}\binom{m_l}{i}=\Theta(m_l^{d_0})$.

\paragraph{LP-call counts inside Alg.~\ref{alg:search-sub-regions} (per parent region)}
For a fixed parent region $A_{l-1}^{\hat{x}}$, Alg.~\ref{alg:search-sub-regions} performs LP-based operations:
\begin{itemize}
\item \textbf{Hyperplane--region intersection filtering:} $O(d_l)$ feasibility (or min/max) LPs to determine $\widehat{\mathcal{H}}_l^{\hat{x}}$ (whose size is $m_l\le d_l$).
\item \textbf{Per-cell representative:} $O(N_l)$ Chebyshev-center LPs (one per enumerated cell). These LPs use $d_0{+}1$ variables (radius included).
\item \textbf{Neighbor discovery:} $O(m_l\,N_l)$ facet-crossing / neighbor-feasibility LPs (each cell has $O(m_l)$ candidate facets induced by $\widehat{\mathcal{H}}_l^{\hat{x}}$).
\end{itemize}
Optional redundancy removal can be treated as post-processing; if implemented via full LP-based redundancy checks,
it adds an extra $O(K_l\,N_l)$ LPs, but does not change the core enumeration logic.
Hence, the dominant LP count per parent region is
\begin{equation}
\begin{split}
\#\mathrm{LPs\ per\ parent\ region} 
&= O\!\big(m_l\,N_l(m_l,d_0)\big)\\
&= O\!\big(m_l\,(m_l+1)^{d_0}\big),
\end{split}
\label{eq:LPcount}
\end{equation}
and in the worst case $\#\mathrm{LPs}=O(d_l^{\,d_0+1})$.

\paragraph{Constraint growth}
If $K_{l-1}$ inequalities describe a parent region at layer $l{-}1$, then each child region at layer $l$
adds at most one sidedness constraint per intersecting hyperplane, i.e., at most $m_l$ new inequalities. Thus
\begin{equation}
K_l
\;=\;
O\!\big(K_{l-1}+m_l\big)
\;=\;
O\!\Big(K_0+\sum_{t=1}^{l} m_t\Big),
\label{eq:Kl}
\end{equation}
and in worst-case bounds (without redundancy removal) $K_l=K_0+\sum_{t=1}^{l} d_t$.

\paragraph{Per-layer time}
Let $R_l$ be the number of distinct regions after layer $l$ (so $R_0=1$).
Each region at layer $l{-}1$ spawns at most $N_l(m_l,d_0)$ children, hence
\begin{equation}
\begin{aligned}
R_l &\;\le\; R_{l-1}\,N_l(m_l,d_0) \\
\Rightarrow\quad R_{l-1} &\;\le\; \prod_{t=1}^{l-1} N_t(m_t,d_0).
\end{aligned}
\end{equation}
Combining with \eqref{eq:LPcount}, the total time spent at layer $l$ is bounded by
\begin{equation}
\begin{split}
T_l = O\Big(&
R_{l-1}\, m_l\,(m_l+1)^{d_0} \; \times \\
& \max\big\{T_{\mathrm{LP}}(d_0,K_l),\,T_{\mathrm{LP}}(d_0+1,K_l)\big\}
\Big).
\end{split}
\label{eq:Tl}
\end{equation}
For simplicity, one may write $T_{\mathrm{LP}}(d_0{+}1,K_l)$ as the dominant term.

\paragraph{Total time and a compact upper bound}
Summing \eqref{eq:Tl} across layers and using $N_t(m_t,d_0)=O((m_t+1)^{d_0})$ gives the explicit worst-case bound
\begin{equation}
T_{\mathrm{total}}
\;=\;
\sum_{l=1}^{L}
\Big(\prod_{t=1}^{l-1} (m_t+1)^{\,d_0}\Big)\,
m_l\,(m_l+1)^{\,d_0}\,
T_{\mathrm{LP}}(d_0{+}1,K_l).
\label{eq:Tsum-start}
\end{equation}
Moreover, to avoid degenerate cases where $m_t=0$, define $\bar m_t\triangleq \max\{1,m_t\}$ and
$P\triangleq\prod_{t=1}^L \bar m_t^{d_0}$. Then,
\begin{equation}
\Big(\prod_{t=1}^{l-1} \bar m_t^{\,d_0}\Big)\,\bar m_l^{\,d_0+1}
= P \cdot \frac{\bar m_l}{\prod_{t=l+1}^L \bar m_t^{\,d_0}}
\;\le\; P \, \bar m_l.
\end{equation}
and therefore a looser but compact bound is
\begin{equation}
T_{\mathrm{total}}
\;=\;
O\!\Big(
\big[\prod_{t=1}^{L} \bar m_t^{\,d_0}\big]\,
\big[\sum_{l=1}^{L} \bar m_l\big]\,
\max_{1\le l\le L} T_{\mathrm{LP}}(d_0{+}1,K_l)
\Big).
\label{eq:Ttotal-compact}
\end{equation}
In worst-case notation, replace $\bar m_l$ by $d_l$.

\paragraph{Exponent-style simplification}
Let $\bar m_{\max}=\max_{1\le t\le L} \bar m_t$ and $\bar m_{\mathrm{geo}}=(\prod_{t=1}^L \bar m_t)^{1/L}$.
Then $\prod_{t=1}^L \bar m_t^{d_0}=\bar m_{\mathrm{geo}}^{d_0 L}$ and $\sum_{l=1}^L \bar m_l\le L\,\bar m_{\max}$, so from \eqref{eq:Ttotal-compact},
\begin{equation}
T_{\mathrm{total}}
\;=\;
O\!\Big(
\bar m_{\mathrm{geo}}^{\,d_0 L}\,
L\,\bar m_{\max}\,
\max_{1\le l\le L} T_{\mathrm{LP}}(d_0{+}1,K_l)
\Big).
\end{equation}
For uniform width and full intersection ($m_l\equiv d$ so $\bar m_l\equiv d$), we obtain
\begin{equation}
T_{\mathrm{total}}
\;=\;
O\!\Big(
d^{\,d_0 L + 1}\,L\,T_{\mathrm{LP}}(d_0{+}1,\,K_0+Ld)
\Big).
\end{equation}

\section{Experiments}
\label{sec:experiments}
We conduct 6 groups of experiments to demonstrate the capability of AffineLens in computing, enumerating, and visualizing affine regions across diverse PANN architectures. Our focus is empirical: we aim to reveal how architectural choices---including width, depth, neuron composition, convolutional inductive biases, and residual connections---as well as training dynamics and activation functions, shape both the number and the geometry of affine regions expressed by trained networks. Concretely, we study (i) the effect of network width on region counts, (ii) the effect of depth on region counts, (iii) how neurons in shallow vs.\ deep PANNs contribute to region expressivity, (iv) how convolutional architectures organize affine-region partitions compared with MLPs, (v) how the number of affine regions varies in residual networks with different numbers of residual units, and (vi) how affine regions and decision boundaries evolve during training under different activation functions (ReLU vs.\ LeakyReLU).

\subsection{Experimental Setup}
\label{sec:4.1}
The data distributions for the two-moons dataset and the synthetic random dataset used throughout the paper are shown in Fig.~\ref{fig:f6}. Unless otherwise specified, all experiments use ReLU activations; in addition, we include LeakyReLU as a complementary comparison in~\ref{sec:4.5} and~\ref{sec:4.7}. Affine-region counts depend on the input domain. In all experiments, we use an $\ell_{\infty}$-box (hypercube) calibration domain
\begin{equation}
A_{0}=\{x\in\mathbb{R}^{d_{0}}\mid \|x\|_{\infty}\le 1\}=[-1,1]^{d_{0}}.
\end{equation}
Specifically, we use $A_{0}=[-1,1]^2$ for the 2D experiments, $A_{0}=[-1,1]^3$ for the 3D experiments, and $A_{0}=[-1,1]^5$ for the 5D experiments. For initialization, we use Kaiming-uniform weights to stabilize gradient propagation under CPA activations. All models are trained with Adam (batch size 32) using cross-entropy loss and a fixed learning rate of $10^{-4}$; unless otherwise stated, other hyperparameters follow default settings. Our implementation is based on PyTorch, with NumPy and SciPy used for numerical computation. Because affine-region enumeration becomes computationally intensive for deep PANNs---due to repeated hyperplane filtering and region expansion---we use multi-processing to parallelize core computations on multi-core CPUs. The framework supports user-defined input ranges, enabling localized analysis over arbitrary subdomains when needed. All experiments are conducted on a computing platform with eight NVIDIA RTX 6000 Ada Generation GPUs (48GB VRAM each) and dual AMD EPYC 9754 processors.

\begin{figure}[t]
  \centering
  \subfloat[]{%
    \safeincludegraphics[width=0.42\linewidth,height=12.0cm,keepaspectratio]{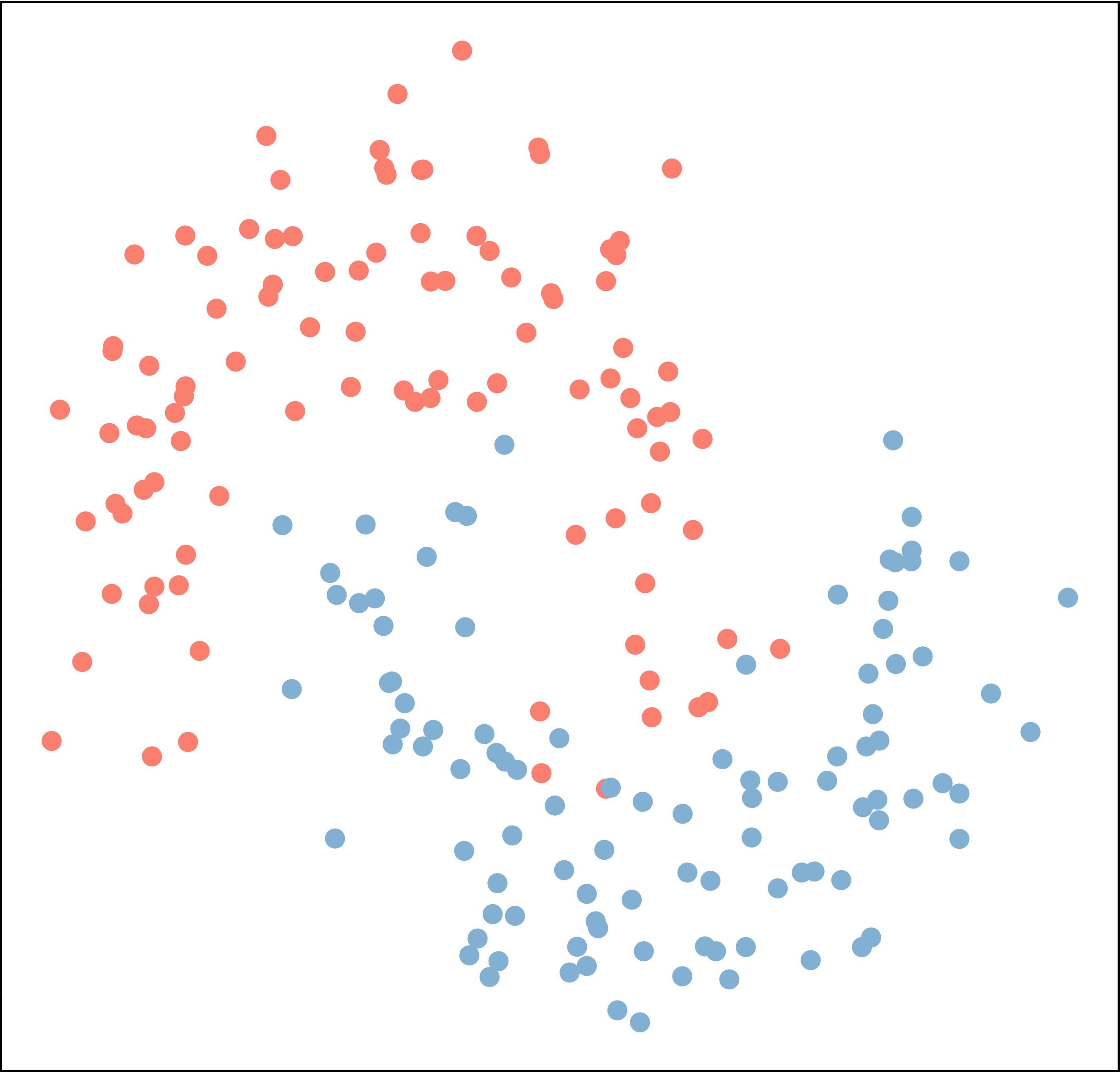}%
    \label{fig:f6a}%
  }\quad
  \subfloat[]{%
    \safeincludegraphics[width=0.42\linewidth,height=6.15cm,keepaspectratio]{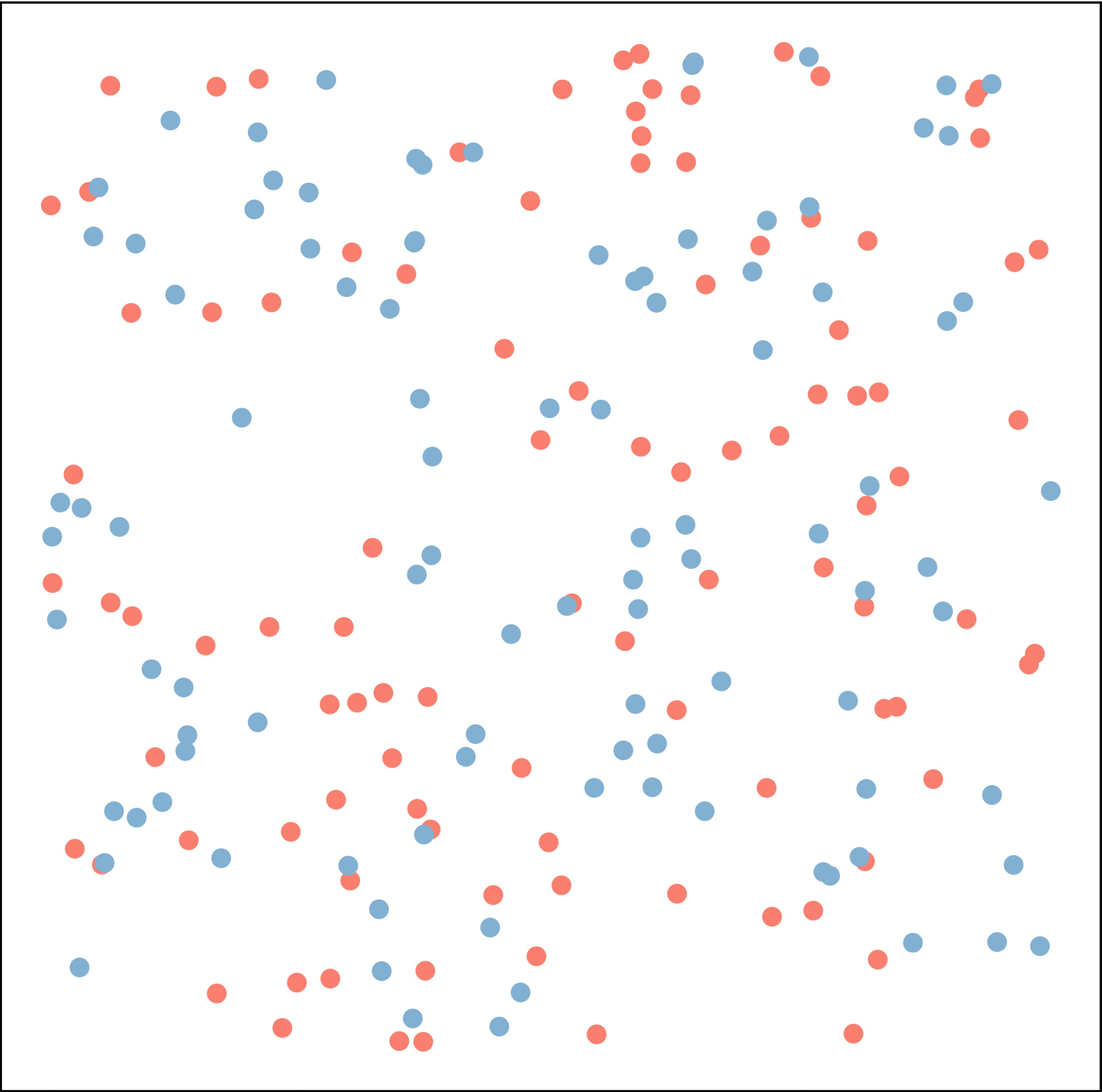}%
    \label{fig:f6b}%
  }

  \caption{Two types of data distributions are used in the experiments, each comprising 200 samples. (a) The two-moon dataset. (b) The synthetic random dataset.}
  \label{fig:f6}
\end{figure}

\subsection{The Impact of Network Width}
\label{sec:4.2}
This experiment investigates the influence of network width on the expressive capacity of PANNs, as quantified by the number of affine regions a network can represent. A fundamental characteristic of PANNs is their ability to partition the input space into multiple linear regions, which directly reflects the model’s capacity to approximate complex functions. However, accurately quantifying the number of affine regions expressed by network width in high-dimensional input spaces remains an open and pressing challenge. A quantitative examination of how width alone contributes to the partitioning structure of PANNs is both timely and necessary, especially for applications that prioritize shallow architectures due to interpretability or computational constraints. To isolate the impact of width, we designed a series of controlled experiments using single-layer MLPs, thus removing any confounding factors introduced by varying depth. Each network was trained using the synthetic random dataset, initialization schemes, and hyperparameter settings; width was the only parameter allowed to vary, with values set to 16, 32, and 64 neurons, respectively. This controlled setup ensures that any observed differences in the number of affine regions can be directly attributed to changes in width. We conducted experiments across four different input dimensionalities (2D, 3D, 4D, and 5D) to further examine how the interaction between network width and input space geometry influences expressivity. For each configuration, we computed the number of affine regions formed within a fixed calibration area after 5000 training epochs. The results, summarized in Table \ref{tab:table1}, consistently show that increasing the width leads to a substantial increase in the number of linear regions across all input dimensions. Moreover, a strong dimensional effect is observed: as the input space expands from 2D to 5D, the growth in region count becomes increasingly pronounced, even under the same width settings. This behavior aligns with the geometric intuition that higher-dimensional spaces allow for more intricate hyperplane arrangements, resulting in a combinatorial explosion of partitioned regions. These findings underscore the critical role of width in enhancing the representational power of shallow PANNs. By empirically demonstrating how width influences the granularity of affine partitions, particularly in high-dimensional settings, AffineLens provides valuable insights for architecture design and capacity control in both theoretical and applied contexts.
\begin{table}[htbp]
\centering
\small
\caption{Number of affine regions expressed by PANNs of varying widths across different input dimensionalities.}
\label{tab:table1}
\begin{tabularx}{\linewidth}{l|X X X X}
\toprule
\diagbox{Width}{Dim.} & 2D & 3D & 4D & 5D \\
\midrule
\textnormal{[16]} & 43 & 318 & 852 & 2,684 \\
\textnormal{[32]} & 218 & 1,864 & 10,663 & 62,027 \\
\textnormal{[64]} & 742 & 15,800 & 188,312 & 2,149,855 \\
\bottomrule
\end{tabularx}
\end{table}

\subsection{The Impact of Network Depth}
\label{sec:4.3}
This experiment is designed to quantitatively evaluate the influence of PANN depth on expressive capacity, as reflected by the number of affine regions the network is capable of representing. In particular, we examine how variations in depth affect expressivity under diverse input distributions and dimensional settings. To ensure that the observed effects can be attributed exclusively to depth, the network width is held constant by fixing the number of neurons per layer at 16, 32, and 64, respectively, across all configurations. This approach effectively controls for confounding factors introduced by variations in network width. While prior theoretical studies have suggested that deeper networks can represent increasingly complex piecewise linear functions by inducing a larger number of affine regions, empirical validation of this claim remains limited. Understanding how depth interacts with other factors such as input dimensionality and distribution is crucial for both the theoretical characterization and practical design of deep architectures. We conducted experiments using 2 distinct types of input distributions: the two moon dataset \cite{book44} and a synthetic random dataset. All networks were trained using identical input samples, weight initialization schemes, and hyperparameter settings to ensure experimental consistency. The only variable modified was network depth, which was varied systematically from 2 to 10 layers. After 5000 epochs of training, we measured the number of distinct affine regions represented by the network within a fixed calibration area. As illustrated in Fig. \ref{fig:f3}, deeper networks are capable of partitioning the input space into a greater number of affine regions. Furthermore, Table \ref{tab:table2} reports experimental results combining increased input dimensionality with varying network depths. When the input space is expanded from 2D to 4D, the number of affine regions grows more rapidly with increasing depth. This observation highlights a notable increase in the combinatorial complexity of hyperplane arrangements in higher-dimensional spaces, further emphasizing the role of depth in enhancing representational power. Taken together, these findings provide strong empirical evidence that increasing the depth of a PANN significantly boosts its expressive capacity by partitioning the input space into finer-grained affine regions.
\begin{figure}[htbp]
\centering
\safeincludegraphics[width=\linewidth,height=0.42\textheight,keepaspectratio]{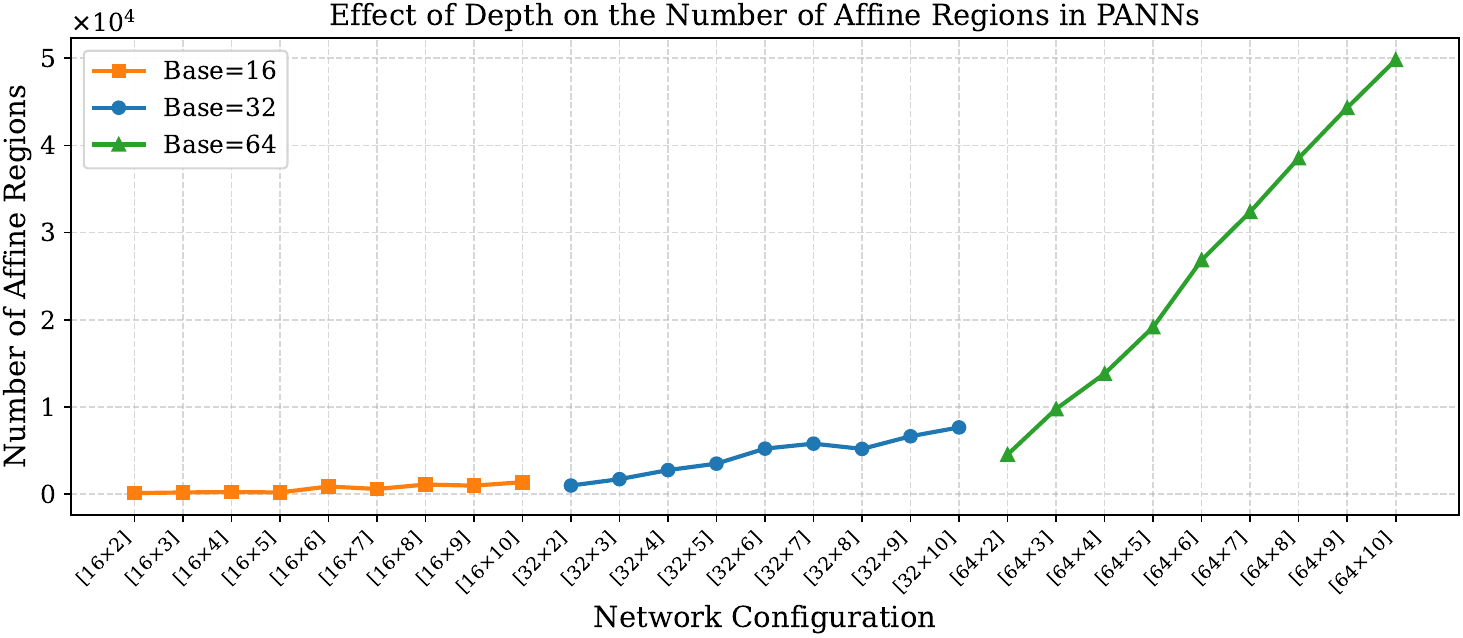}
\caption{The impact of network depth on the expressivity in terms of the number of representable affine regions.}
\label{fig:f3}
\end{figure}

\begin{table}[htbp]
\centering
\small
\caption{Number of affine regions under different dimensions and depth.}
\label{tab:table2}

\textbf{(a) Configurations with 16 neurons per layer}\\
\begin{tabularx}{\linewidth}{c|XXX}
\toprule
\diagbox{Dim.}{Depth} & [16$\times$2] & [16$\times$3] & [16$\times$4] \\
\midrule
3D & 1,077  & 3,505  & 10,524  \\
4D & 11,937 & 58,589 & 163,502 \\
\bottomrule
\end{tabularx}

\vspace{6pt}

\textbf{(b) Configurations with 32 neurons per layer}\\
\begin{tabularx}{\linewidth}{c|XXX}
\toprule
\diagbox{Dim.}{Depth} & [32$\times$2] & [32$\times$3] & [32$\times$4] \\
\midrule
3D & 12,442  & 96,551  & 102,922  \\
4D & 256,653 & 929,162 & 2,294,901 \\
\bottomrule
\end{tabularx}
\end{table}

\subsection{The Impact of Shallow and Deep Layers}
\label{sec:4.4}
This experiment aims to quantitatively investigate the distinct contributions of shallow and deep layers within PANNs to the number of affine regions that the network can represent. Understanding how individual layers affect the network’s overall expressive capacity is essential, especially when the total neuron count remains fixed. Such knowledge is critical for the design and optimization of neural architectures, as it informs how computational resources should be allocated across layers to maximize representational power without increasing overall complexity. To isolate and analyze the effects of depth and width on expressivity, we performed three controlled experiments based on a consistent five-layer network architecture, where each layer initially contained 32 neurons. In each experiment, a single layer was selectively expanded to a higher neuron count of 64, 128, or 256 neurons, while all other layers remained fixed at 32 neurons. This modification was applied to one layer at a time, progressing sequentially from the shallowest to the deepest layer in the network. By holding all other factors constant, including identical input two moon dataset \cite{book44}, consistent initialization procedures, and uniform hyperparameters, we ensured that any observed differences in expressivity could be attributed solely to the position of the expanded layer within the network’s depth. We monitored the evolution of the number of affine regions formed within a fixed calibration domain at multiple epoch milestones (1, 10, 50, 100, 200, 300, 500, 1000, 2000, and 5000). As shown in Fig. \ref{fig:f4} for the 2D input case, when total neuron count is kept constant, increasing the neuron count in earlier layers leads to a significantly larger number of affine regions during the initial stages of training. Specifically, a comparison of region counts within the first 50 epochs across all three experimental setups shows that, in our experiments, increasing width in earlier layers has a strong effect on region count during early training. While deep layers are often associated with high-level feature extraction, earlier layers may play a more direct role in early-stage partitioning of the input space, which sets the foundation for subsequent transformations. By elucidating how neuron allocation at various depths affects expressivity over time, AffineLens enables a fine-grained investigation into how specific layers influence network expressivity within the architecture. This capability facilitates a more judicious balance between depth and width, thereby enhancing both performance and interpretability.

\begin{figure}[!ht]
  \centering
  \subfloat[Perturbation with a single layer containing 64 neurons.]{%
    \safeincludegraphics[width=\linewidth,height=0.22\textheight,keepaspectratio]{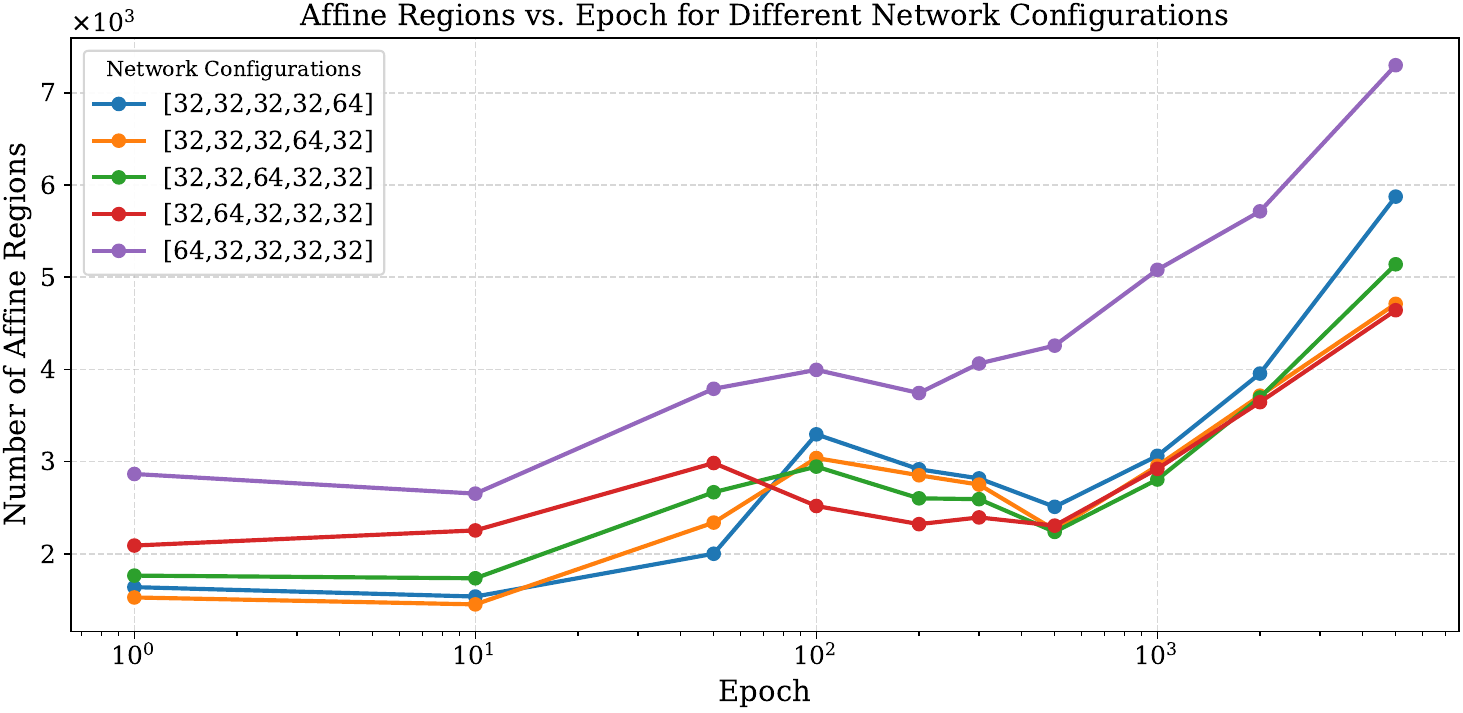}%
    \label{fig:f4a}%
  }\\[-1pt]
  \subfloat[Perturbation with a single layer containing 128 neurons.]{%
    \safeincludegraphics[width=\linewidth,height=0.22\textheight,keepaspectratio]{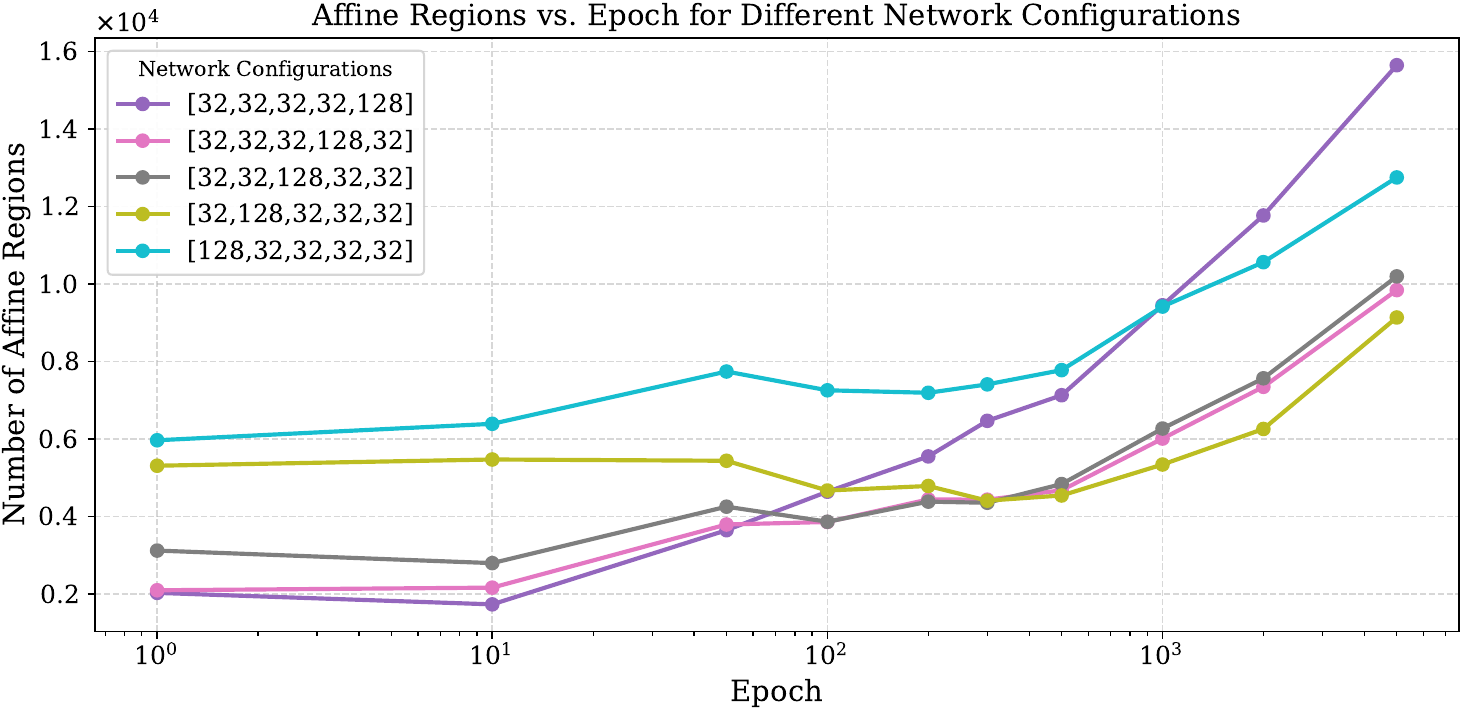}%
    \label{fig:f4b}%
  }\\[-1pt]
  \subfloat[Perturbation with a single layer containing 256 neurons.]{%
    \safeincludegraphics[width=\linewidth,height=0.22\textheight,keepaspectratio]{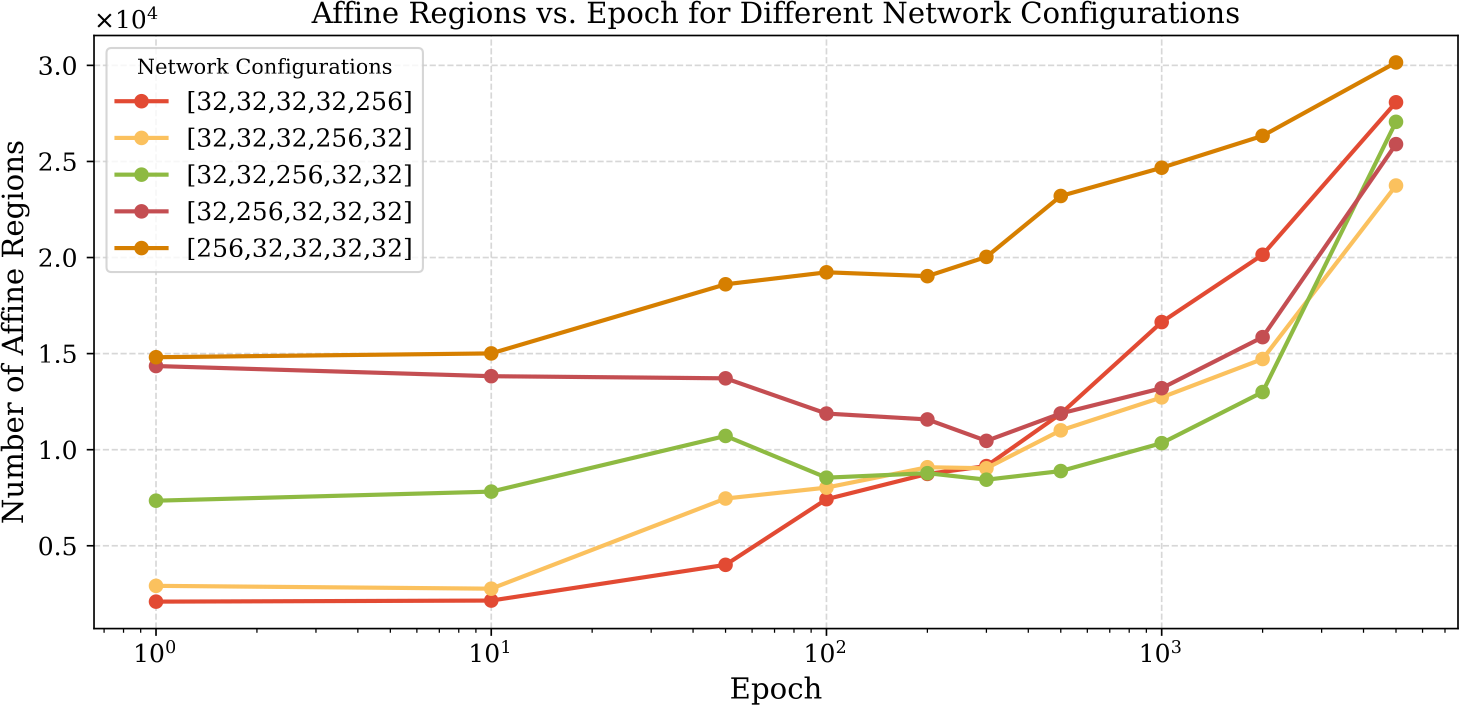}%
    \label{fig:f4c}%
  }

  \caption{An analysis of the influence of neurons in shallow and deep layers on the number of affine regions expressed by PANNs.}
  \label{fig:f4}
\end{figure}

\subsection{The Impact of Convolutional Networks}
\label{sec:4.5}

\begin{figure*}[htbp]
  \centering
  \subfloat[Arrangement and count of affine regions expressed by MLPs.]{%
    \safeincludegraphics[width=0.95\linewidth,height=0.35\textheight,keepaspectratio]{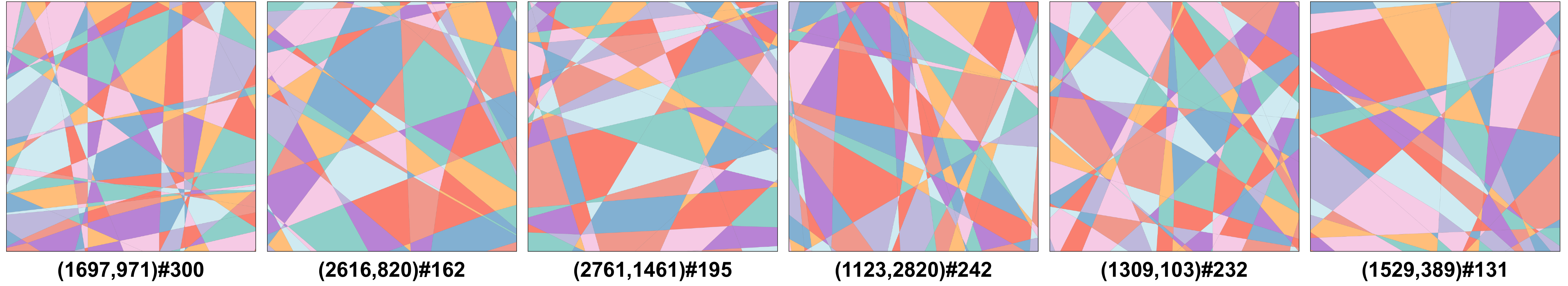}%
    \label{fig:f5a}%
  }\\[3pt]
  \subfloat[Arrangement and count of affine regions expressed by CNNs.]{%
    \safeincludegraphics[width=0.95\linewidth,height=0.35\textheight,keepaspectratio]{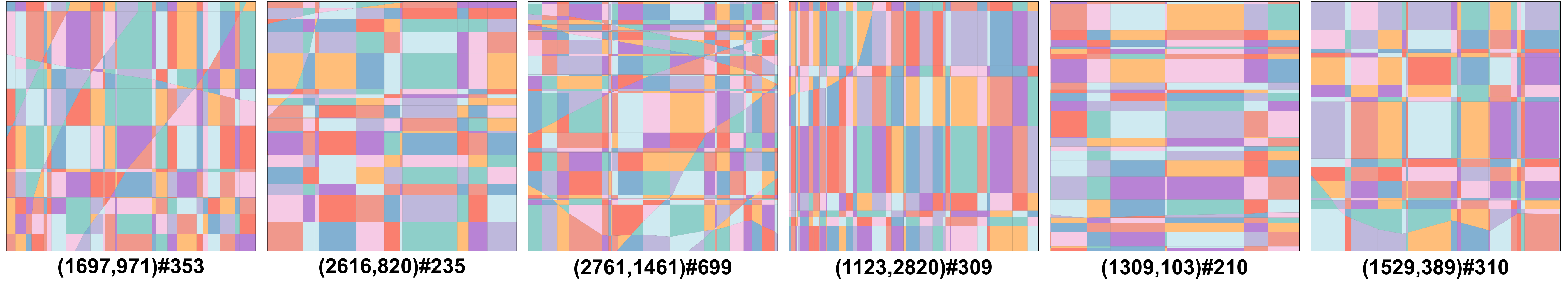}%
    \label{fig:f5b}%
  }

  \caption{Arrangement and quantitative analysis of affine regions expressed by MLPs and CNNs under different 2D projections (ReLU on CIFAR-10).}
  \label{fig:f5}
\end{figure*}

This experiment uses AffineLens to visualize and analyze how convolutional inductive biases shape the organization of affine regions. We perform a controlled comparison between MLPs and CNNs trained on the same task, focusing on differences in the \emph{projected} arrangement and apparent regularity of affine-region patterns. Note that all plots are 2D projections of high-dimensional partitions and are intended as qualitative diagnostics.

\paragraph{ReLU on CIFAR-10 (AlexNet-style CNN).}
We conduct experiments on CIFAR-10 \cite{book45}. For the CNN, we adopt a lightweight AlexNet \cite{book46}-style architecture enhanced with batch normalization and average pooling. All models are trained for 100 epochs using identical optimization settings. After training, we randomly select two input dimensions and visualize affine-region boundaries in the corresponding 2D projection space, and report the number of distinct regions observed in that projection.

Fig.~\ref{fig:f5} shows representative examples. Empirically, CNNs tend to exhibit more regular projected partition patterns (often appearing grid-like), whereas MLPs yield more irregular and unstructured projected arrangements under the same dataset and training protocol. This suggests convolutional priors encourage more spatially coherent functional geometry in low-dimensional projections.

\paragraph{LeakyReLU on MNIST (LeNet-5-style CNN).}
To verify that the above observations are not specific to ReLU, we repeat the same MLP-vs-CNN analysis on MNIST \cite{book47} using LeakyReLU activations. The CNN follows a LeNet-5 \cite{book47}-style architecture augmented with batch normalization and average pooling. Both models are trained for 50 epochs under identical optimization settings. As before, we randomly select two input dimensions for 2D projection, visualize the projected affine-region boundaries, and count the number of distinct projected regions.

\begin{figure*}[htbp]
  \centering
  \subfloat[Arrangement and count of affine regions expressed by MLPs.]{%
    \safeincludegraphics[width=0.95\linewidth,height=0.35\textheight,keepaspectratio]{image/Fig.8a.pdf}%
    \label{fig:f8a}%
  }\\[3pt]
  \subfloat[Arrangement and count of affine regions expressed by CNNs.]{%
    \safeincludegraphics[width=0.95\linewidth,height=0.35\textheight,keepaspectratio]{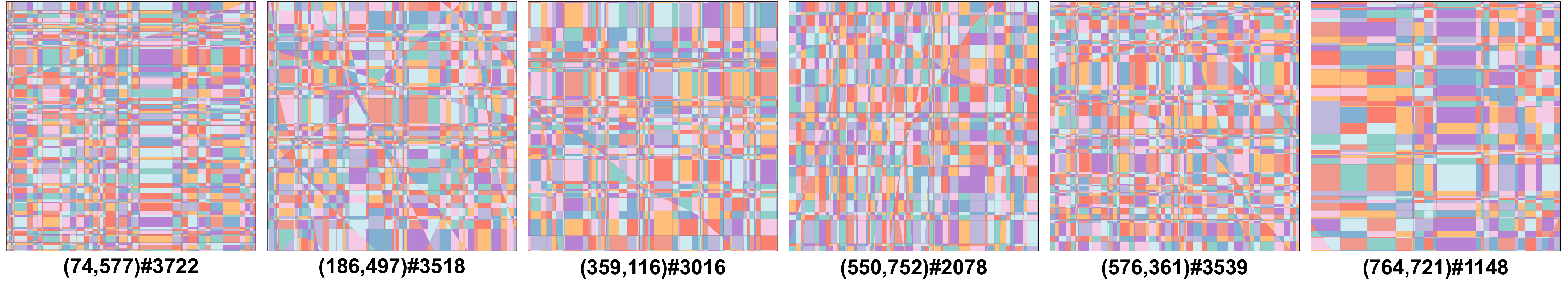}%
    \label{fig:f8b}%
  }

  \caption{Arrangement and quantitative analysis of affine regions expressed by MLPs and CNNs under different 2D projections (LeakyReLU on MNIST).}
  \label{fig:f8}
\end{figure*}

\begin{figure*}[htbp]
  \centering
  \subfloat[]{%
    \safeincludegraphics[width=0.95\linewidth,height=0.31\textheight,keepaspectratio]{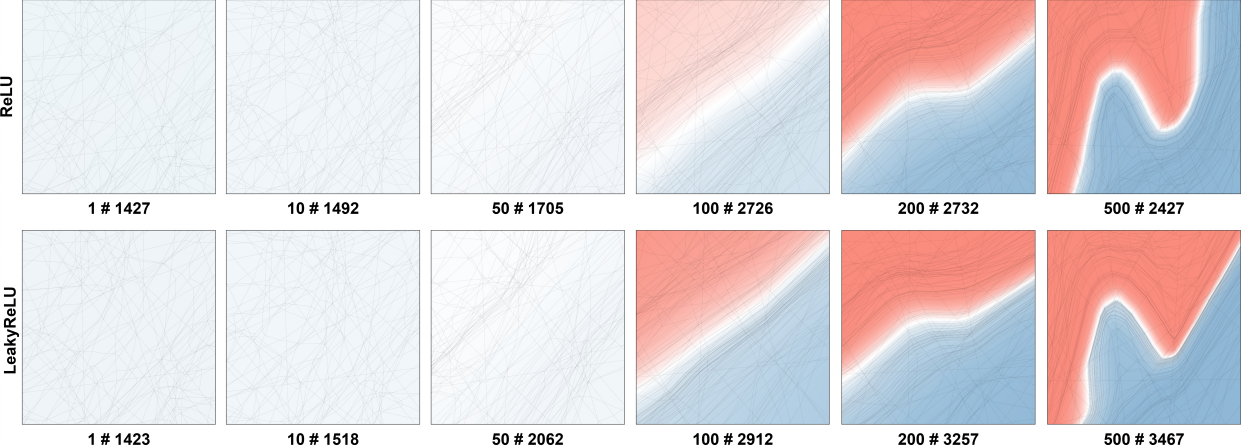}%
    \label{fig:f7a}%
  }\\[3pt]
  \subfloat[]{%
    \safeincludegraphics[width=0.95\linewidth,height=0.31\textheight,keepaspectratio]{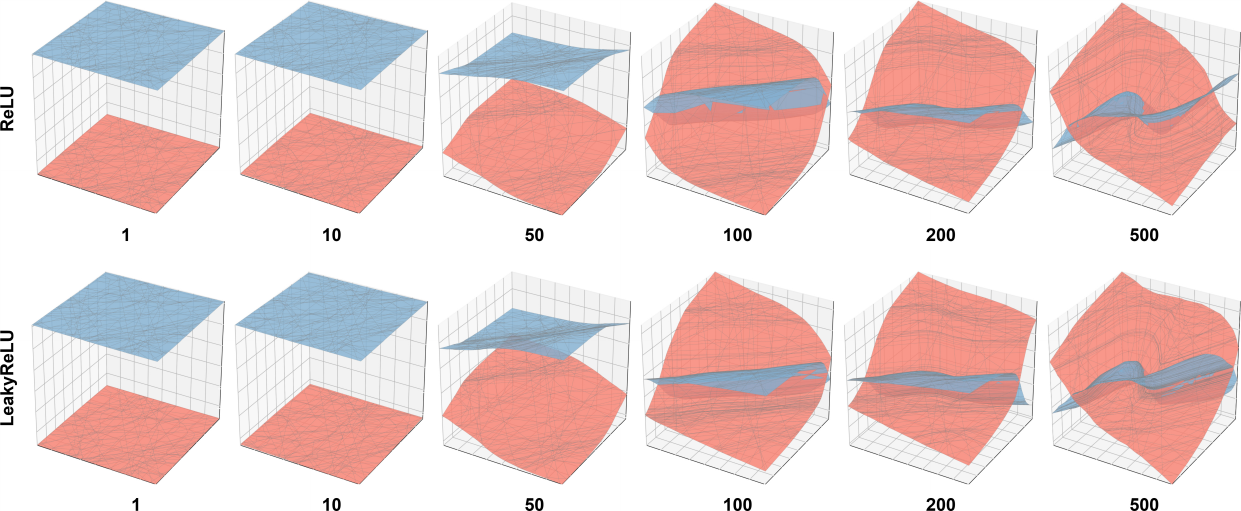}%
    \label{fig:f7b}%
  }

  \caption{(a) Dynamic visualization of decision boundary evolution and affine region formation during training of a 6-layer fully connected MLP on the two-moons dataset. Each layer contains 32 neurons, and ReLU vs.\ LeakyReLU are compared. Visualizations are shown at epochs 1, 10, 50, 100, 200, and 500. Blue and red regions indicate the current affine partitions associated with each class, while the white area approximates the decision boundary inferred from the network output. The number of affine regions at each epoch is reported beneath each plot. (b) 3D visualization of affine region structures and decision boundary dynamics for the same network. Colored hyperplanes represent class-conditional affine segments, and their intersections approximate the evolving decision boundary, providing geometric intuition into how PANNs refine decision boundaries during training.}
  \label{fig:f7}
\end{figure*}

Representative visualizations are presented in Fig.~\ref{fig:f8}. Consistent with the CIFAR-10 results, the CNN again shows more regular projected partition patterns, while the MLP induces more irregular and fragmented projected affine regions. Together, these results indicate that convolutional architecture can shape the observed affine-region organization in low-dimensional projections, and they illustrate that AffineLens can reveal such structural differences across datasets and activation choices.

\subsection{Affine Regions of Residual Connections}
\label{sec:4.6}
We investigate the impact of residual connections on the expressive capacity of PANNs by analyzing the number of affine regions expressed by residual networks with different numbers of skip connections. By quantifying how residual connections alter the number and arrangement of affine regions, we aim to provide new insights into their role in shaping the expressivity of modern deep networks. Specifically, we consider 3 network configurations incorporating 2, 3 and 4 residual connections, respectively. Each network uses hidden width 32 throughout and is trained on the two moon dataset \cite{book44}. Batch Normalization is included in all residual models. Each residual connection is an identity skip within a residual block that adds the block input to the block output. For each setting, we train two variants: a residual version (skip additions enabled) and a non-residual baseline (skip additions removed) while keeping all other components unchanged. To ensure a fair comparison, all experimental settings are kept identical across models. The only varying factor is the presence and number of residual connections. For each model, we compute the total number of affine regions formed after training using AffineLens. As reported in Table \ref{tab:table3}, empirical results demonstrate that networks equipped with residual connections consistently express a greater number of affine regions compared to those without, under otherwise identical conditions. This experiment further validates the capability of AffineLens to accurately quantify affine regions in PANNs that incorporate residual connections.

\begin{table}[htbp]
\centering
\small
\caption{Number of affine regions expressed by PANNs with and without residual connections, under different epochs.}
\label{tab:table3}
\begin{tabularx}{\linewidth}{l|X X X X X}
\toprule
\diagbox{Model}{Epoch} & 100 & 500 & 1000 & 2000 & 5000 \\
\midrule
\multicolumn{6}{l}{\textbf{2-Res}} \\
\textnormal{With residual}     & 2,317     &   2,940 & 3,920 &   4,472    &   5,449    \\
\textnormal{Without residual}  &   1,701   &   1,929 & 2,477 &   2,776    &    3,518   \\
\cmidrule(lr){1-6}
\multicolumn{6}{l}{\textbf{3-Res}} \\
\textnormal{With residual}     &   3,776     &   5,994   &  7,015  &    8,362     &    9,439     \\
\textnormal{Without residual}  &   2,491     &   3,006   &  3,467  &    4,137     &    4,786     \\
\cmidrule(lr){1-6}
\multicolumn{6}{l}{\textbf{4-Res}} \\
\textnormal{With residual}     &  5,658    &  10,404 & 12,469 &  14,870    &   21,263   \\
\textnormal{Without residual}  &  3,796    &   4,187 & 5,106  &   5,826    &    7,026   \\
\bottomrule
\end{tabularx}
\end{table}

\subsection{Comparison Across Different Activation States}
\label{sec:4.7}
AffineLens enables dynamic visualization of how affine regions and decision boundaries evolve in PANNs across training stages. This experiment illustrates how AffineLens can track geometric and functional changes of a network during optimization, providing insights into how activation functions and training progress influence the formation of affine partitions and the emergence of decision boundaries. Such dynamics are useful for interpreting model behavior, diagnosing training inefficiencies, and guiding architectural design toward desirable expressivity and generalization.

As shown in Fig.~\ref{fig:f7}, we train a six-layer MLP on the two-moons dataset, where each hidden layer has 32 neurons. We compare two variants of the same architecture, one using ReLU and the other using LeakyReLU. For each model, we compute and visualize the affine-region structure at six training stages ($epoch=1,10,50,100,200,500$), and render the corresponding decision boundaries in the input space. Fig.~\ref{fig:f7}(a) shows the evolution of decision boundaries induced by the changing affine partition: blue and red regions denote the current class-associated affine partitions, while the white areas approximate the decision boundary inferred from network outputs. The epoch index and the total number of affine regions at that stage are annotated below each plot (region counts in parentheses). These snapshots reveal how the network progressively refines its approximation of the target distribution and how partition complexity increases as training proceeds.

To further provide geometric intuition beyond 2D projections, Fig.~\ref{fig:f7}(b) visualizes the evolution of affine regions and decision boundaries in a 3D setting. Two families of colored hyperplanes represent class-conditional affine segments, and their intersection surfaces approximate the evolving decision boundary at each training stage. This 3D view highlights how piecewise-affine partitions form and interact during training in higher-dimensional spaces.

\section{Discussion}

AffineLens can be computationally expensive due to region explosion and the heavy reliance on repeated LP solves, which may hinder scalability for very wide/deep networks or high-dimensional $A_0$. It further assumes the network admits a CPA representation, limiting applicability to non-CPA activations or stochastic training-time operators. Future work includes stronger constraint canonicalization and caching/parallel LP pipelines, as well as standardized metrics on a shared $A_0$ (e.g., region-volume statistics or boundary-neighborhood region density).

\section{Conclusion}
This paper presents AffineLens, a principled framework for analyzing the expressive capacity of PANNs through the enumeration and geometric characterization of affine regions. By leveraging connections to hyperplane arrangements, our method enables the precise localization and visualization of affine partitions in high-dimensional input spaces, and remains applicable to architectures with residual connections and arbitrary depth and width. Extensive experiments demonstrate that both width and depth significantly influence region complexity, and that in our experiments increasing width in earlier layers has a strong effect on region count during early training. Furthermore, our analysis suggests that convolutional architectures tend to exhibit more structured and regular projected affine-region patterns in low-dimensional visualizations, compared with MLPs. These findings highlight the utility of AffineLens as a diagnostic tool for dissecting neural network expressivity, offering theoretical insight and practical guidance for architecture design. Future work may extend our methodology to encompass other PANNs.

\newpage

\vskip 0.2in
\IfFileExists{sample.bib}{%
  \bibliography{sample}%
}{%
}

\end{document}